\newtheorem{theorem}{Theorem}
\newtheorem{lemma}[theorem]{Lemma}
\theoremstyle{remark}
\newtheorem{remark}[theorem]{Remark}
\title{RGMComm: Return Gap Minimization via Discrete Communications in Multi-Agent Reinforcement Learning} 
\author {
    Jingdi Chen\textsuperscript{\rm 1},
    Tian Lan\textsuperscript{\rm 1},
    Carlee Joe-Wong\textsuperscript{\rm 2}
}
\begin{document}

\maketitle


\begin{abstract}

Communication is crucial for solving cooperative Multi-Agent Reinforcement Learning tasks in partially observable Markov Decision Processes. Existing works often rely on black-box methods to encode local information/features into messages shared with other agents, leading to the generation of continuous messages with high communication overhead and poor interpretability. Prior attempts at discrete communication methods generate one-hot vectors trained as part of agents' actions and use the Gumbel softmax operation for calculating message gradients, which are all heuristic designs that do not provide any quantitative guarantees on the expected return. 
This paper establishes an upper bound on the return gap between an ideal policy with full observability and an optimal partially observable policy with discrete communication. This result enables us to recast multi-agent communication into a novel online clustering problem over the local observations at each agent, with messages as cluster labels and the upper bound on the return gap as clustering loss. To minimize the return gap, we propose the Return-Gap-Minimization Communication (RGMComm) algorithm, which is a surprisingly simple design of discrete message generation functions and is integrated with reinforcement learning through the utilization of a novel Regularized Information Maximization loss function, which incorporates cosine-distance as the clustering metric. Evaluations show that RGMComm significantly outperforms state-of-the-art multi-agent communication baselines and can achieve nearly optimal returns with few-bit messages that are naturally interpretable. 
\footnote{Codes are available at: https://github.com/JingdiC/RGMComm.git. Appendix is in~\cite{chen2023rgmcomm}.}

\end{abstract}

\section{Introduction}
In multi-agent tasks, communication is necessary to successfully complete tasks when agents have partial observability of the environment. Multi-agent reinforcement learning (MARL) has recently seen success in scenarios that require communication~\cite{10.5555/646288.686470,6303906}.
Existing approaches to multi-agent communications have considered \textit{continuous communication}~\cite{dial,commnet,maddpg,atoc,wang2019learning,sarnet} and \textit{discrete communication}~\cite{emergentlan,freed2020sparse,lazaridou2020emergent,9812285,tucker2022trading}.
However, \textit{continuous communication messages} refer to numerical vectors originating from a continuous space. These vectors are generated by encoding local information/features using Deep Neural Networks (DNNs) or attention networks. This is largely a black-box approach with high communication overhead and offers little explainability of the messages. 
Efforts on generating \textit{discrete communication messages} enable agents to broadcast one-hot vectors apt for solving particular tasks. 
However, discretizing messages through the imposition of one-hot vectors, as outlined in~\cite{maddpg, commnet, freed2020sparse, lazaridou2020emergent, 9812285}, inherently precludes agents from learning some desirable properties of the messages. This is because the one-hot vectors do not establish relationships between messages, since each one-hot vector remains orthogonal to and equally distant from all other one-hot vectors.
There are some recent works that try to address the limitations of one-hot communication by endowing agents with learnable messages~\cite{tucker2021emergent} or discretized bottleneck layer outputs of autoencoders and using them as communication vectors ~\cite{tucker2022trading}, but none of these methods provides average return guarantees for a Decentralized Partially Observable Markov Decision Process (Dec-POMDP) with communication, and the communication module in ~\cite{tucker2022trading} still requires a large vocabulary size. 


In this paper, we propose a discrete MARL communication algorithm, Return-Gap-Minimization Communication (RGMComm), that is proven to achieve a closed-form guarantee on the optimal expected average return for Dec-POMDPs. It analyzes what information in an agent's local observations is relevant for other agents' decision-making (for minimizing the resulting return gap) and supports sharing messages between MARL agents using any finite-size discrete alphabet. 
RGMComm determines the message each agent sends to others, given its current observed local state. 

Our key insight is that the problem of generating messages from local observations can be viewed as clustering the local observations into clusters corresponding to different message labels.
More precisely, consider a Dec-POMDP problem with $n$ agents. Each agent $j$ computes a message $m_{j}$ sent to each other agent $i$ based on its local observation $o_j$ in each step $t$. Let $\boldsymbol{m_{-i}}=\{m_{j},\forall j \neq i\}$ be the collection of all messages received by agent $i$. 
Then the policy of agent $i$ with communication is conditioned on local observation $o_i$ and messages $\boldsymbol{m_{-i}}$ rather than the full observation $o_1,\ldots,o_n$. Its impact on the optimal expected average return can be characterized using a Policy Change Lemma, relying on the average distance between joint action-values in each cluster. Intuitively, if the action-values corresponding to the observations in each cluster are likely maximized at the same actions, the policy with communication would have a small return gap with the policy assuming full observations. 
Since minimizing the return gap directly is intractable, we derive a closed-form upper bound and minimize it to find the optimal communication strategy, which is a common technique in Machine Learning (ML)~\cite{duan2017fast,hoffman2016elbo}.
For any discrete communication strategy, we quantify the gap between the optimal expected average return of an ideal policy with full observability, i.e., $\pi^*=[\pi_i^*(a_i|o_1,\ldots,o_n),\forall i]$ and the optimal expected average return of a communication-enabled, partially-observable policy, i.e., $\pi=[\pi_i(a_i|o_i,\boldsymbol{m_{-i}}),\forall i]$. 
This return gap is proven to be bounded by $O(\sqrt{\epsilon}nQ_{\rm max})$, with respect to the number of agents $n$, the highest action-value $Q_{\rm max}$ and the average cosine-distance $\epsilon$ between joint action-value vectors corresponding to the same messages.

To the best of our knowledge, ours is the first theoretical result quantifying the value of communication in Dec-POMDP through a closed-form return bound. We proposed RGMComm, which trains an online clustering network via a Regularized Information Maximization (RIM) loss function~\cite{imsat}, consisting of \textbf{a novel cosine-distance clustering loss} $I_{\rm CD}$ specifically designed to minimize the proven return bound for message label generation, and a mutual information regularization loss $I_{\rm MI}(o_j,m_{j})$ to guarantee even cluster sizes and distinct observations-to-cluster assignments.
The main contributions of the paper are as follows:
\begin{itemize}
\item We quantify the return gap between an ideal policy with full observability and a partially-observable policy with communication via a closed-form upper-bound.
\item We introduce RGMComm, a discrete communication framework for MARL, aimed at minimizing the upper bound of the return gap. Instead of generating messages directly via DNNs, RGMComm employs a finite-size discrete alphabet and treats message generation as an online clustering problem.
\item With few-bit messages, RGMComm significantly outperforms state-of-the-art communication-enabled MARL algorithms with partial observability, including CommNet~\cite{commnet}, MADDPG~\cite{maddpg}, IC3Net~\cite{ic3net}, TarMAC~\cite{tarmac}, SARNet~\cite{sarnet} and VQ-VIB~\cite{tucker2022trading} and achieves nearly-optimal returns.
\end{itemize}

\section{Related Work}

\label{sec:related_work_main}
Previous work in MARL communication mostly establishes how agents should learn to communicate assuming continuous communication vectors~\cite{dial,ic3net,atoc, tarmac,sarnet}. Inspired by the efficient communication patterns observed in human interactions, where people only use a discrete set of words, some previous works enable agents to learn sparse and discrete communication. 
CommNet~\cite{commnet} and MADDPG~\cite{maddpg} learn continuous communication vectors alongside their policy and can generate discrete 1-hot binary communication vectors using a pre-defined set of communication alphabets. However, CommNet uses a large single network for all agents, so it cannot easily scale and would perform poorly in an environment with a large number of agents. MADDPG adapts the actor-critic framework using a centralized critic that takes as input the observations and actions of all agents and trains an independent policy network for each agent to generate communication as part of the actions, where each agent would learn a policy specializing in specific tasks. The policy network easily overfits the number of agents, which is infeasible in large-scale MARL. 

Further, communication in terms of sequences of discrete symbols are investigated in~\cite{havrylov2017emergence,emergentlan}. Both of these works generate categorical messages 
and adapt the Gumbel-Softmax Estimator~\cite{jang2016categorical,maddison2016concrete}  to make the communication models with discrete labels differentiable, training them with backpropagation algorithms. However, \cite{havrylov2017emergence} only studies a two-agent setting, and \cite{emergentlan} only learns message meanings through the prompt rewards in a small action space. Many works, therefore, resort to differentiable discrete communication such as~\cite{freed2020sparse,lazaridou2020emergent,9812285,tucker2022trading} where agents are allowed to directly optimize each other’s communication policies through gradients. However, these approaches impose a strong constraint on the nature of communication, which limits their applicability to many real-world multi-agent coordination tasks. 
Besides, the primary limitation is that these approaches lack rigorous policy regret minimization guarantees, operating as heuristic designs.

\section{Preliminaries and Problem Formulation}
\label{sec:4}

\noindent {\bf Dec-POMDP: } A Dec-POMDP~\cite{bernstein2002complexity} models cooperative MARL, where agents lack complete information about the environment and have only local observations. 
We formulate a Dec-POMDP with communication as a tuple $D=\langle S, A, P, \Omega, O, I, n, R, \gamma, g \rangle$, where $S$ is the joint \textbf{state} space and $A=A_1\times A_2 \times \dots \times A_n$ is the joint \textbf{action} space, where $\boldsymbol{a}=(a_1,a_2,\dots,a_n)\in A$ denotes the joint action of all agents.
$P(\boldsymbol{s}'|\boldsymbol{s},\boldsymbol{a}): S \times A \times S \to [0,1] $ is the \textbf{state transition function}. 
$\Omega$ is the \textbf{observation} space. $O(\boldsymbol{s}, i): S \times I \to \Omega$ is a function that maps from the joint state space to distributions of observations for each agent $i$. 
$I = \{1,2,\dots,n\}$ is a set of $n$ agents, $R(\boldsymbol{s}, \boldsymbol{a}): S \times A \to \mathbb{R}$ is the \textbf{reward function} in terms of state $\textbf{s}$ and joint action $\boldsymbol{a}$, $\gamma$ is the discount factor, and 
$g: \Omega \to M$ is the \textbf{message generation function} that each agent $j$ uses to encode its local observation $o_j$ into a communication message for other agents $i \neq j$. 

\noindent {\bf Return Gap Minimization: }Given a policy $\pi$, we consider the average expected return $J(\pi) =\lim_{T \to \infty} (1/T) E_{\pi} [{\sum_{t=0}^T R_{t}}]$. The goal of this paper is to minimize the return gap between an ideal policy $\pi^*=[\pi_i^*(a_i|o_1,\ldots,o_n),\forall i]$ with full observability and a partially-observable policy with communications $\pi=[\pi_i(a_i|o_i,\boldsymbol{m_{-i}}),\forall i]$ where message labels $\boldsymbol{m_{-i}}=\{m_{j}=g(o_j), \forall j \neq i\}$, i.e.,
\begin{equation}
    \min J(\pi^*) - J(\pi,g).
\end{equation}
While the problem is equivalent to maximizing $J(\pi,g)$, the return gap can be analyzed more easily by contrasting $\pi$ and $\pi^*$. We derive an upper bound of the return gap and then design efficient communication strategies to minimize it. 
We consider the discounted observation-based state value and the corresponding action-value functions for the Dec-POMDP: 
\begin{equation} \label{eq:v}
\setlength{\abovedisplayskip}{1pt}
\setlength{\belowdisplayskip}{1pt}
\begin{aligned}
    V^{\pi}(\boldsymbol{o}) = \mathbb{E}_{\pi}[\sum_{i=0}^{\infty}\gamma^i \cdot R_{t+i} \Big|\boldsymbol{o}_t=\boldsymbol{o}, \boldsymbol{a}_{t} \sim \pi], \\
Q^{\pi}(\boldsymbol{o},\boldsymbol{a})=\mathbb{E}_{\pi}[\sum_{i=0}^{\infty}\gamma^i \cdot R_{t+i} \Big|\boldsymbol{o}_t=\boldsymbol{o}, \boldsymbol{a}_{t}= \boldsymbol{a}],
\end{aligned}
\end{equation}
where $t$ is the current time step. Re-writing the average expected return as an expectation in terms of $V^{\pi}(\boldsymbol{o})$:
\begin{equation}\label{eq:J_v}
J(\pi) = \lim_{\gamma\rightarrow 1} E_{\mu}[(1-\gamma)V^{\pi}(\boldsymbol{o})],
\end{equation}
where $\mu$ is the initial observation distribution at time step $t=0$, i.e., $\boldsymbol{o}(0) \sim \mu$. We will leverage this state-value function
$V^{\pi}(\boldsymbol{o})$ and its corresponding action-value function $Q^{\pi}(\boldsymbol{o},\boldsymbol{a})$ to unroll the Dec-POMDP and derive a closed-form upper-bound to quantify the return gap. For $V^{\pi}(\boldsymbol{o})$ and $Q^{\pi}(\boldsymbol{o},\boldsymbol{a})$ we suppress $g$ for simpler notations.


\section{Upper-Bounding the Return Gap}
\label{sec:theory}
In this section, we present the theoretical statements and proof sketches of our results, followed by an illustrative example. \textbf{Appendix A\cite{chen2023rgmcomm} contains a notation table, a detailed proof outline, and complete proofs for all theoretical results}. Our assumptions are that the observation and action spaces in the Dec-POMDP tuple (explained in Sec.~\ref{sec:4}) are discrete with finite observations and actions, which is a technical condition to simplify the proof. For continuous observation/action spaces, we can extend the results by considering the cosine-distance between action-value functions and replacing summations with integrals, or sampling the action-value functions as an approximation. We use $V^*$  for $V^{\pi^*}$ and $Q^*$ for $Q^{\pi^*}$ for simplicity. 

\begin{lemma}\label{lemma:1}
{(Policy Change Lemma.)} For any policies $\pi^*$ and $\pi$, the optimal expected average return gap is bounded by:
\begin{equation} \label{equ:regret_1_main}
\setlength{\belowdisplayskip}{1pt}
    \begin{aligned}
    J(\pi^*)-J(\pi) &\le \sum_{m} \sum_{\boldsymbol{o}\sim m} [Q^{*}(\boldsymbol{o},\boldsymbol{a}_t^{\pi^*}) - Q^{\pi}(\boldsymbol{o},\boldsymbol{a}_t^{\pi})] d_{\mu}^{\pi}(\boldsymbol{o}),  \\
   d^{\pi}_{\mu}(\boldsymbol{o})&=(1-\gamma)\sum_{t=0}^{\infty} \gamma^{t} \cdot P(\boldsymbol{o}_t=\boldsymbol{o}|\pi,\mu),
    \end{aligned}
\end{equation}
where $d^{\pi}_{\mu}(\boldsymbol{o})$ is the $\gamma$-discounted visitation probability under policy $\pi$ and initial observation distribution $\mu$, and $\sum_{\boldsymbol{o}\sim m}$ is a sum over all observations corresponding to message $m$.
\end{lemma}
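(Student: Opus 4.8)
The plan is to prove the statement as a ``reverse'' form of the performance-difference (policy-gradient) identity of Kakade and Langford, arranged so that the weighting that appears is the $\gamma$-discounted visitation $d^{\pi}_{\mu}$ induced by the \emph{partially-observable} policy $\pi$ (not by $\pi^*$), and then to relax the resulting equality into the stated bound using optimality of $\pi^*$. Throughout I treat $\pi$ as a (stochastic) map from the joint observation $\boldsymbol{o}$ to joint actions: since every message $m_{ij}=g(o_j)$ is a deterministic function of $\boldsymbol{o}$, the product $\prod_i\pi_i(a_i\mid o_i,m_{-i})$ is a well-defined policy on the fully-observable process, and $Q^{\pi},V^{\pi}$ below are its observation-based value functions from \eqref{eq:v}. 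In particular $\pi$ is feasible for the class over which $\pi^*$ is optimal, which is all we will need at the very end.

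\emph{Step 1 (telescoping identity).} Fix an initial $\boldsymbol{o}$ and roll out $\boldsymbol{o}_0=\boldsymbol{o},\boldsymbol{a}_0,\boldsymbol{o}_1,\dots$ under $\pi$. Along this trajectory the discounted sum of the one-step terms $\gamma^{t}\bigl(R_t+\gamma V^{*}(\boldsymbol{o}_{t+1})-V^{*}(\boldsymbol{o}_t)\bigr)$ telescopes in its $V^{*}$ part to $\sum_t\gamma^t R_t - V^{*}(\boldsymbol{o})$. Taking $\mathbb{E}_{\pi}[\,\cdot\mid\boldsymbol{o}_0=\boldsymbol{o}]$, using $\mathbb{E}_{\pi}[R_t+\gamma V^{*}(\boldsymbol{o}_{t+1})\mid\boldsymbol{o}_t,\boldsymbol{a}_t]=Q^{*}(\boldsymbol{o}_t,\boldsymbol{a}_t)$ and $\mathbb{E}_{\pi}[\sum_t\gamma^t R_t\mid\boldsymbol{o}_0=\boldsymbol{o}]=V^{\pi}(\boldsymbol{o})$, and writing $\boldsymbol{a}_t^{\pi^*}$ for the action of $\pi^*$ at $\boldsymbol{o}_t$ so that $V^{*}(\boldsymbol{o}_t)=Q^{*}(\boldsymbol{o}_t,\boldsymbol{a}_t^{\pi^*})$ while $\boldsymbol{a}_t^{\pi}$ is the action of $\pi$, this yields
\[
V^{*}(\boldsymbol{o})-V^{\pi}(\boldsymbol{o})=\mathbb{E}_{\pi}\!\left[\,\sum_{t=0}^{\infty}\gamma^{t}\bigl(Q^{*}(\boldsymbol{o}_t,\boldsymbol{a}_t^{\pi^*})-Q^{*}(\boldsymbol{o}_t,\boldsymbol{a}_t^{\pi})\bigr)\ \middle|\ \boldsymbol{o}_0=\boldsymbol{o}\right].
\]

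\emph{Step 2 (introduce $d^{\pi}_{\mu}$) and Step 3 (relax).} Multiplying by $(1-\gamma)$, taking $\mathbb{E}_{\mu}$, interchanging the (absolutely convergent, as $|R|$ is bounded) sum and expectations, and using $(1-\gamma)\sum_t\gamma^t P(\boldsymbol{o}_t=\boldsymbol{o}\mid\pi,\mu)=d^{\pi}_{\mu}(\boldsymbol{o})$, the left-hand side becomes $J(\pi^*)-J(\pi,g)$ via \eqref{eq:J_v} (with the same $\gamma\to1$ convention), giving the \emph{equality} $J(\pi^*)-J(\pi,g)=\sum_{\boldsymbol{o}}d^{\pi}_{\mu}(\boldsymbol{o})\bigl(Q^{*}(\boldsymbol{o},\boldsymbol{a}_t^{\pi^*})-Q^{*}(\boldsymbol{o},\boldsymbol{a}_t^{\pi})\bigr)$. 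Now, since $\pi^*$ is optimal under full observability and $\pi$ is a feasible fully-observable policy, $V^{*}\ge V^{\pi}$ pointwise, hence by the Bellman relations $Q^{*}(\boldsymbol{o},\boldsymbol{a})\ge Q^{\pi}(\boldsymbol{o},\boldsymbol{a})$ for all $(\boldsymbol{o},\boldsymbol{a})$, in particular $Q^{*}(\boldsymbol{o},\boldsymbol{a}_t^{\pi})\ge Q^{\pi}(\boldsymbol{o},\boldsymbol{a}_t^{\pi})$. Because $d^{\pi}_{\mu}(\boldsymbol{o})\ge 0$, replacing the second $Q^{*}$ by $Q^{\pi}$ can only increase the sum, and regrouping $\sum_{\boldsymbol{o}}=\sum_{m}\sum_{\boldsymbol{o}\sim m}$ by the (deterministic) message $m$ attached to $\boldsymbol{o}$ gives exactly \eqref{equ:regret_1_main}.

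\emph{Main obstacle.} The substantive step is Step 1: arranging the telescoping so that the trajectory measure is $d^{\pi}_{\mu}$ while every summand is still expressed through the \emph{single} reference $Q^{*}$ (this common reference is what later permits chaining a sequence of intermediate policies). One must also be mildly careful that in the Dec-POMDP the observation-based $V^{*},Q^{*}$ satisfy the one-step identities used above --- precisely where the finiteness assumption $|O|,|A|<\infty$ and the convention of folding histories into $o_i$ enter; the $\gamma\to1$ limit and the summation/expectation interchange are routine under bounded rewards, and optimality of $\pi^*$ is used only in the final monotone replacement.
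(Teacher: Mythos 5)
Your proposal is correct and follows essentially the same route as the paper: the telescoping identity in your Step 1 is exactly the ``unrolling of the Dec-POMDP from $t=0$ onward'' via $V^{\pi}$ and $Q^{\pi}$ that the paper's proof uses, yielding the performance-difference equality weighted by $d^{\pi}_{\mu}$, after which the regrouping $\sum_{\boldsymbol{o}}=\sum_m\sum_{\boldsymbol{o}\sim m}$ is immediate. Your only extra ingredient --- the monotone replacement $Q^{*}(\boldsymbol{o},\boldsymbol{a}^{\pi})\ge Q^{\pi}(\boldsymbol{o},\boldsymbol{a}^{\pi})$ to reach the mixed-$Q$ form --- is consistent with how the lemma is actually applied (with $\pi^*$ optimal over a class containing $\pi$), and you correctly flag that this is where optimality, rather than literal arbitrariness of the two policies, is needed.
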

\textbf{Proof Sketch.} Our key idea is to leverage the state value function $V^{\pi}(\boldsymbol{o})$ and its corresponding action value function $Q^{\pi}(\boldsymbol{o},\boldsymbol{a})$ in Eq.(\ref{eq:v}) to unroll the Dec-POMDP from timestep $t=0$ and onward. The detailed proof is provided in Appendix A.

Then we define the action value vector corresponding to observation $o_j$, i.e.,
\begin{equation} \label{equ:vector_re_write_Q_1}
    \setlength{\abovedisplayskip}{1pt}
    \setlength{\belowdisplayskip}{1pt}
    \begin{aligned}
    \bar{Q}^*(o_j)=[\widetilde{Q}^*(o_{-j},o_j), \forall{o_{-j}}],
    \end{aligned}
\end{equation}
where $o_{-j}$ are the observations of all other agents and $\widetilde{Q}^*(o_{-j},o_j)$ is a vector of action-values weighted by marginalized visitation probabilities $d_{\mu}^{\pi}(o_i|o_j)$ and corresponding to different actions, i.e., $\widetilde{Q}^*(o_{-j},o_j)=[Q^*(o_{-j},o_j,\boldsymbol{a})\cdot d_{\mu}^{\pi}(o_{-j}|o_j), \forall \boldsymbol{a}]$.

Next, we view the message generation $m_{j}=g(o_j)$ as a clustering problem with $o_j$ as input and $m_{j}$ as labels, i.e., the $o_j$ are divided into clusters and each is labeled with the clustering label to be sent out to all other agents $i \neq j$.
We bound the policy gap between $\pi^*_{(j)}$ and $\pi_{(j)}$, which are optimal policies conditioned on $o_j$ and $m_{j}$, using the average cosine-distance of action value vectors $\bar{Q}^*(o_j)$ corresponding to $o_j$ in the same cluster and its cluster center $\bar{H}(m)=\sum_{o_j\sim m}\bar{d}_m(o_j)\cdot\bar{Q}^*(o_j)$ under each message $m$. Here $\bar{d}_m(o_j)=d_{\mu}^{\pi}(o_j)/d_{\mu}^{\pi}(m)$ is the marginalized probability of $o_j$ in cluster $m$ and $d_{\mu}^{\pi}(m)$ is the probability of message $m$ under policy $\pi$, and the environments' initial observation distribution is represented by $\boldsymbol{o}(t=0) \sim \mu$.
To this end, we let $\epsilon(o_j)= D_{cos}( \bar{Q}^*(o_j), \bar{H}(m))$ be the cosine-distance between vectors $\bar{Q}^*(o_j)$ and $\bar{H}(m)$ and consider the \textbf{average cosine-distance} $\epsilon$ across all clusters represented by different message labels $m$, which is defined as:
\begin{equation} \label{equ:epsilon_main}
    \epsilon \triangleq \sum_m d_{\mu}^{\pi}(m) \sum_{o_j\sim m} \bar{d}_m(o_j) \cdot \epsilon(o_j), 
\end{equation}
The result is summarized in Lemma~\ref{lemma:regret_0}.
\begin{lemma}   
\label{lemma:regret_0}
(Impact of Communication.) Consider two optimal policies $\pi^*_{(j)}$ and $\pi_{(j)}$ conditioned on $o_j$ and $m_{j}$, respectively, while the observability and policies of all other agents remain the same. The optimal expected average return gap is bounded by:
\begin{equation} \label{equ:regret_3}
    \begin{aligned}
    & J({\pi}^*_{(j)}) - J({\pi_{(j)}},g) \le  O(\sqrt{\epsilon}Q_{\rm max})
    \end{aligned}
\end{equation}
where $Q_{\rm max}$ is the maximum absolute action value of $\bar{Q}^*(o_j)$ in each cluster as $Q_{\rm max}=max_{o_j}||\bar{Q}^*(o_j)||_2$, and $\epsilon$ is the average cosine-distance defined in Eq.(\ref{equ:epsilon_main}).
\end{lemma}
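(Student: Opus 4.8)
The plan is to invoke the Policy Change Lemma (Lemma~\ref{lemma:1}) on the pair $(\pi^*_{(j)},\pi_{(j)})$ and then control, one cluster at a time, the resulting action-value gap in terms of the geometry of the vectors $\bar{Q}^*(o_j)$. Lemma~\ref{lemma:1} yields
\[
J(\pi^*_{(j)})-J(\pi_{(j)},g)\ \le\ \sum_{m}\sum_{\boldsymbol{o}\sim m}\big[Q^{*}(\boldsymbol{o},\boldsymbol{a}_t^{\pi^*_{(j)}})-Q^{\pi_{(j)}}(\boldsymbol{o},\boldsymbol{a}_t^{\pi_{(j)}})\big]\,d^{\pi_{(j)}}_{\mu}(\boldsymbol{o}),
\]
where $\boldsymbol{o}=(o_{-j},o_j)$ and, in the inner sum, all $o_j$ carry the same label $m$. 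Since $\pi_{(j)}$ observes only $m$ in place of $o_j$, the action it plays is constant over $o_j\sim m$, so the task reduces to showing that this single ``frozen'' action is near-optimal for every $o_j$ in the cluster provided the weighted action-value vectors $\bar{Q}^*(o_j)$ from Eq.(\ref{equ:vector_re_write_Q_1}) are nearly collinear with the centroid $\bar{H}(m)$.

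For a fixed cluster I would proceed by (i) reducing the per-observation shortfall to $\arg\max$-stability: the action of $\pi_{(j)}$ is (by its optimality) the maximizer associated with $\bar{H}(m)$, and for any $o_j$ the gap between the best response to $\bar{Q}^*(o_j)$ and the best response to $\bar{H}(m)$ is at most twice the Euclidean distance from $\bar{Q}^*(o_j)$ to the ray $\{t\,\bar{H}(m):t\ge 0\}$; (ii) converting cosine distance to Euclidean distance: that ray-distance equals $\|\bar{Q}^*(o_j)\|\sin\theta$ with $\cos\theta=1-\epsilon(o_j)$, hence is at most $\|\bar{Q}^*(o_j)\|\sqrt{2\epsilon(o_j)}$; and (iii) bounding $\|\bar{Q}^*(o_j)\|$ by $O(Q_{\rm max})$, using that the entries of $\bar{Q}^*(o_j)$ are action-values in $[-Q_{\rm max},Q_{\rm max}]$ reweighted by the probabilities $d^{\pi}_{\mu}(o_{-j}|o_j)$ (the hidden constant depending only on the finite action space). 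Combining (i)--(iii), each term in the double sum is $O\big(Q_{\rm max}\sqrt{\epsilon(o_j)}\big)$, and the visitation weights reorganize into the nested average $\sum_m d^{\pi}_{\mu}(m)\sum_{o_j\sim m}\bar{d}_m(o_j)(\cdot)$ that defines $\epsilon$ in Eq.(\ref{equ:epsilon_main}).

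It then remains to aggregate over clusters: the running bound is $O(Q_{\rm max})\sum_m d^{\pi}_{\mu}(m)\sum_{o_j\sim m}\bar{d}_m(o_j)\sqrt{\epsilon(o_j)}$, and since $\{d^{\pi}_{\mu}(m)\bar{d}_m(o_j)\}$ is a probability distribution, Jensen's inequality applied to the concave map $x\mapsto\sqrt{x}$ moves the sum inside the radical, giving $O(Q_{\rm max})\sqrt{\sum_m d^{\pi}_{\mu}(m)\sum_{o_j\sim m}\bar{d}_m(o_j)\epsilon(o_j)}=O(\sqrt{\epsilon}\,Q_{\rm max})$, as claimed. The step I expect to be the main obstacle is the appearance of $Q^{\pi_{(j)}}$, rather than $Q^{*}$, on the right-hand side of Lemma~\ref{lemma:1}: the collinearity argument is naturally phrased in terms of $Q^{*}$, so one must either swap $Q^{\pi_{(j)}}$ for $Q^{*}$ at a controllable cost, or, more cleanly, first upper bound $J(\pi^*_{(j)})-J(\pi_{(j)},g)$ by the return gap to an explicitly constructed message-conditioned policy (e.g.\ the one that plays the $\bar{H}(m)$-maximizing action within each cluster), whose action-values stay tied to $Q^{*}$ --- reconciling this construction with the optimality of $\pi_{(j)}$ is the delicate point. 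A secondary subtlety is that the played action indexes only a slice of $\bar{Q}^*(o_j)$ (the $o_{-j}$-coordinates are averaged against $d^{\pi}_{\mu}(o_{-j}|o_j)$, not selected), so the $\arg\max$-stability estimate in step (i) must be applied to the correctly marginalized version of the vector.
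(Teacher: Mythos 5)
Your proposal is correct and follows essentially the same route as the paper's proof: apply the Policy Change Lemma per cluster, lower-bound $J(\pi_{(j)},g)$ via an explicit message-conditioned policy playing the best response to the cluster centroid $\bar{H}(m)$, bound the per-observation shortfall by the component of $\bar{Q}^*(o_j)$ orthogonal to $\bar{H}(m)$ (your $\sin\theta\le\sqrt{2\epsilon(o_j)}$ step matching the paper's $\Phi_{\rm max}(Q^{\perp}(o_j))\le O(\sqrt{\epsilon(o_j)}Q_{\rm max})$), and aggregate with Jensen's inequality on the square root. Your ``argmax-stability'' phrasing is just a repackaging of the paper's projection decomposition, and the two subtleties you flag (handling $Q^{\pi_{(j)}}$ via the constructed sub-optimal policy, and applying the bound to the $o_{-j}$-indexed blocks of the vector) are exactly the points the paper resolves in its appendix.
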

\textbf{Proof Sketch.} We give an outline and provide the complete proof in Appendix A.
Since the observability of all other agents $i\neq j$ remains the same, we consider them as a conceptual agent denoted by $-j$. 
For simplicity, we use $\pi^*$ to represent ${\pi}^*_{(j)}$, and $\pi$ to represent ${\pi}_{(j)}$ in distribution functions.



\textit{Step 1: Recasting communication into online clustering}. In this view, policy ${\pi_{(j)}}$ (conditioned on $m_{j}$) is restricted to taking the same actions for all $o_j$ in the same cluster and under the same message $m_{j}$ (the clustering label). 

\textit{Step 2: Rewrite the return gap in vector form}.
We define an auxiliary function $\Phi_{\rm max}(\boldsymbol{X})$ that returns the largest component of vector $\boldsymbol{X}$. 
Since the optimal average return $J(\pi^*_{(j)})$ is conditioned on complete $o_j$, it can achieve the maximum for each vector $\bar{Q}^*(o_j)$. Thus, $\sum_{o_j \sim m}\bar{d}_m(o_j) \cdot \Phi_{\rm max}(\bar{Q}^*(o_j))$ can be defined as selecting the action from optimal policy $\pi^*_{(j)}$ where each agent chooses a different action distribution to maximize $(\bar{Q}^*(o_j))$.  
On the other hand, policy $\pi_{(j)}$ is conditioned on messages $m_{j}$ rather than complete $o_j$ and thus must take the same actions for all $o_j$ in the same cluster. Hence we can construct a (potentially sub-optimal) policy to achieve $\Phi_{\rm max}(\sum_{o_j\sim m}\bar{d}_m(o_j)\cdot\bar{Q}^*(o_j))$ which provides a lower bound on $J(\pi_{(j)}, g)$. (The Appendix details the transformation to vector terms with the help of $\Phi_{\rm max}(\boldsymbol{X})$.)

Using the transformed vector term formats of optimal returns $J(\pi^*_{(j)})$ and $J(\pi_{(j)},g)$ ($J(\pi_{(j)},g)$ is lower bounded), 
we can obtain an upper bound on the return gap:
\begin{equation} \label{equ:regret_2_main}
    \setlength{\abovedisplayskip}{0.8pt}
    \setlength{\belowdisplayskip}{1pt}
    \begin{aligned}
    &J(\pi^*_{(j)})-J(\pi_{(j)})\\
    &\le \sum_{m}d_{\mu}^{\pi}(m)[ \sum_{o_j\sim m}\bar{d}_m(o_j) \cdot \Phi_{\rm max}(\bar{Q}^*(o_j)) \\
    &- \Phi_{\rm max}(\sum_{o_j\sim m}\bar{d}_m(o_j)\cdot\bar{Q}^*(o_j)) ],
    \end{aligned}
\end{equation}

  \textit{Step 3: Projecting action-value vectors toward cluster centers.} 
The policy $\pi_{(j)}$ conditioned on $m_{j}$ takes actions based on the action-value vector at the cluster center $\bar{H}(m)$. For each pair of two vectors $\bar{Q}^*(o_j)$ and $\bar{H}(m)$ with $D(\bar{Q}^*(o_j),\bar{H}(m)) \le \epsilon(o_j)$, we use $\cos{\theta_{o_j}}$ to denote the cosine-similarity between each $\bar{Q}^*(o_j)$ and its center $\bar{H}(m)$. Then we have the cosine distance $D(\bar{Q}^*(o_j),\bar{H}(m)) =1- \cos{\theta_{o_j}} \le \epsilon(o_j)$.
By projecting $\bar{Q}^*(o_j)$ toward $\bar{H}(m)$, $\bar{Q}^*(o_j)$ could be re-written as $\bar{Q}^*(o_j) = Q^{\perp}(o_j) + \cos{\theta_{o_j}} \cdot \bar{H}_m$,
then we could upper bound $\Phi_{max}(\bar{Q}^*(o_j))$ by:
\begin{equation*}
    \setlength{\abovedisplayskip}{1pt}
    \setlength{\belowdisplayskip}{1pt}
    \Phi_{\rm max}(\bar{Q}^*(o_j))\le \Phi_{\rm max}(\cos{\theta}_{o_j}\cdot\bar{H}_m) + \Phi_{\rm max}(Q^{\perp}(o_j)). 
\end{equation*}
Taking a sum over all $o_j$ in the cluster, we have $\sum_{o_j\sim m}\bar{d}_m(o_j)\Phi_{\rm max}(\cos{\theta}_{o_j}\cdot\bar{H}_m)=\Phi_{\rm max}(\bar{H}_m)$, since the projected components $\cos{\theta_{o_j}}\cdot\bar{H}_m$ should add up to exactly $\bar{H}_m$. To bound Eq.(\ref{equ:regret_2_main})'s return gap, it remains to bound the orthogonal components $Q^{\perp}(o_j)$.

  \textit{Step 4: Deriving the upper bound w.r.t. cosine-distance. }
Let $||\cdot||_2$ be the $L_2$ norm, then the maximum function $\Phi_{\rm max}(Q^{\perp}(o_j))$ can be bounded by its $L_2$ norm $C \cdot ||Q^{\perp}(o_j)||_2$ for some constant $C$, i.e., $\Phi_{max}(Q^{\perp}(o_j)) \le C\cdot||Q^{\perp}(o_j)||_2$.
Since $Q^{\perp}(o_j)=\bar{Q}^*(o_j)\cdot sin(\theta)$, $\Phi_{\rm max}(Q^{\perp}(o_j))$ can be further upper bounded by $C\cdot ||\bar{Q}^*(o_j)||_2 \cdot |\sin{\theta}|$.
Denote a constant $Q_{\rm max}$ as the maximum absolute action value of $\bar{Q}^*(o_j)$ in each cluster as $Q_{\rm max}=max_{o_j}||\bar{Q}^*(o_j)||_2$; combined with $|sin(\theta)| = \sqrt{1-cos^2(\theta)}=\sqrt{1-[1-\epsilon(o_j)]^2}$, we can thus obtain 
\begin{equation}\label{eq:q_max_main}
\setlength{\abovedisplayskip}{1pt}
\setlength{\belowdisplayskip}{1pt}
    \begin{aligned}
    &\Phi_{max}(Q^{\perp}(o_j)) \le O(\sqrt{\epsilon(o_j)}Q_{\rm max}).
    \end{aligned}
\end{equation}
Using the concavity of the square root with Eq.(\ref{equ:epsilon_main}), i.e., $\sum_{m}d_{\mu}^{\pi}(m)[ \sum_{o_j\sim m}\bar{d}_m(o_j) \cdot \sqrt{\epsilon(o_j)}]\le \sqrt{\epsilon}$, we derive the desired upper bound $J(\pi^*_{(j)})-J(\pi_{(j)}) \le  O(\sqrt{\epsilon}Q_{\rm max})$. 


\begin{theorem} \label{thm:thm_1}
In $n$-agent Dec-POMDP, the return gap between policy $\pi^*$ with full-observability and communication-enabled policy $\pi$ with partial-observability is bounded by:
\begin{equation} \label{equ:regret_4}
    \begin{aligned}
    & J({\pi}^*) - J({\pi},g) \le  O\left(\sqrt{\epsilon} n Q_{\rm max}\right).
    \end{aligned}
\end{equation}
\end{theorem}
Beginning from $\pi^*=[\pi_i^*{(a_i|o_1,o_2,\dots,o_n),\forall i}]$, we can construct a sequence of $n$ policies, each replacing the conditioning on $o_j$ by messages $m_{j}$, for $j=1$ to $j=n$, one at a time. This will result in policy $\pi$ with partial observability. Applying Lemma~\ref{lemma:regret_0} for $n$ times, we prove the upper bound between $J({\pi}^*)$ and $J({\pi},g)$ in this theorem.


\begin{remark} \label{remark:label_M}
    Thm.~\ref{thm:thm_1} holds for any arbitrary finite number of message labels $|M|$. Furthermore, increasing $|M|$ reduces the average cosine distance (since more clusters are formed) and, consequently, a reduction in the return gap due to the upper bound derived in Thm.~\ref{thm:thm_1}.
\end{remark}

\begin{figure}[htbp]
\centering
\includegraphics[width=3.0in, height=1.089in]{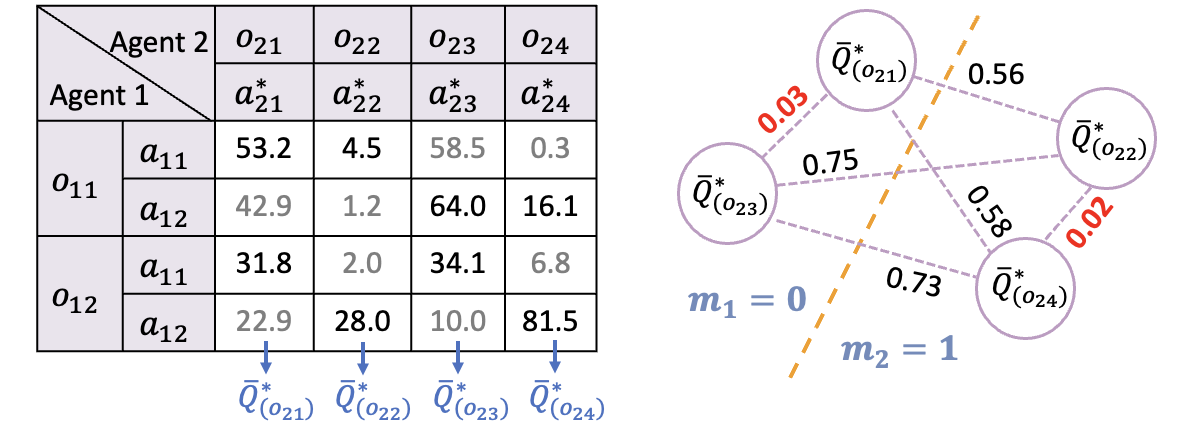}
\vspace{-0.1in}
\caption{An illustrative example of optimizing message generation $m_{2}$ via clustering to minimize the return gap.}
\label{fig:toy_example} 
\vspace{-0.1in}
\end{figure}

\noindent {\bf An illustrative example.} We consider a two-agent matrix game with pay-off $Q^*$ shown in Fig~\ref{fig:toy_example}. Assume that different observations are equally likely and that agent $2$ has a single action in each state. It is easy to see that under an ideal optimal policy $\pi^*=[\pi_1^*,\pi_2^*]$ with full observability $(o_1,o_2)$ can achieve an optimal average reward of $J(\pi_1^*)=1/2\cdot [(53.2+4.5+64.0+16.1)/4 +(31.8+28.0+34.1+81.5)/4]=39.1375$ by choosing the optimal actions in each $(o_1,o_2)$, i.e., $\pi^*={\rm argmax}_{a_1,a_2}Q^*(o_1,o_2,a_1,a_2)$. 
Consider a POMDP scenario where agent $1$ has only local observation $o_1$, and agent 2 can send a 1-bit message $m_{2}=g(o_2)$ encoding its local observation $o_2$. Since the message is only 1-bit but there are 4 possible observations $o_2$, multiple observations must be encoded into the same message. Thus, agent 1 is restricted to a limited policy class $\pi_1(a_1|o_1,m_{2})$ taking the same actions (or action distributions) under the same $o_1$ and for all $o_2$ corresponding to the same message $m_{2}$. Thm.~\ref{thm:thm_1} shows that to minimize the return gap between $\pi_1(a_1|o_1,m_{2})$ with partial observation and the ideal optimal policy $\pi^*=[\pi_1^*,\pi_2^*]$ with full observability, we can leverage a message generation function to minimize the average cosine distance $\epsilon$ between action values for different observations $o_2$. In particular, let $\bar{Q}^*(o_{2})$ be a column of $Q^*$ corresponding to $o_{2}$. There are then four possibilities of $o_2$: $o_{21},o_{22},o_{23},o_{24}$. If $\bar{Q}^*(o_{2i})$ and $\bar{Q}^*(o_{2j})$ are likely maximized at the same actions $a_1$, then encoding $o_{2i}$ and $o_{2j}$ to the same message would result in little return gap, as formalized in Thm.~\ref{thm:thm_1}. The message generation could be viewed as a clustering problem over $o_2$ under vector cosine-distance~\cite{muflikhah2009document}. The right figure in Fig~\ref{fig:toy_example} displays the cosine distances calculated between every pair of action value vectors in $\bar{Q}^*(o_{2i})$ and $\bar{Q}^*(o_{2j})$ (which are readily available during centralized training). According to Thm.~\ref{thm:thm_1}, assigning the same message labels to observations $o_2$ whose optimal action value vectors have smaller cosine distances would result in smaller return gaps. Hence, we assign message label `0' to $\bar{Q}^*(o_{21})$ and $\bar{Q}^*(o_{23})$, whose cosine distance is $0.03$, and message label `1' to $\bar{Q}^*(o_{22})$ and $\bar{Q}^*(o_{24})$, whose cosine distance is $0.02$. Accordingly, the message generation in Fig~\ref{fig:toy_example} leads to an average return of $J(\pi,g) = 38.775$ with an optimal return gap of $J(\pi^*)-J(\pi,g)=0.3625$.

\section{RGMComm Design: Minimizing the Upper Bound of Average Cosine Distance $\epsilon$}

Theorem \ref{thm:thm_1} inspires the RGMComm algorithm (shown in Fig. \ref{fig:algorithm}), which minimizes the upper-bounded return gap between fully-observable and communication-enabled policies. RGMComm employs the online clustering network $g$ to generate message labels, minimizing $\epsilon$ with a cosine-distance loss. Implementation is simple since (1) many MARL algorithms~\cite{maddpg,rashid2018qmix,wang2020dop,zhang2021fop} adopt the actor-critic framework, granting access to action-value functions via DNNs for message training; and (2) Training communication (using online clustering) alongside agent policies becomes feasible by efficiently sampling action-values from replay buffers, making our approach adaptable to continuous state-space problems.


\begin{figure}[htbp]
\centering
\includegraphics[width=2.1in, height=1.53in]{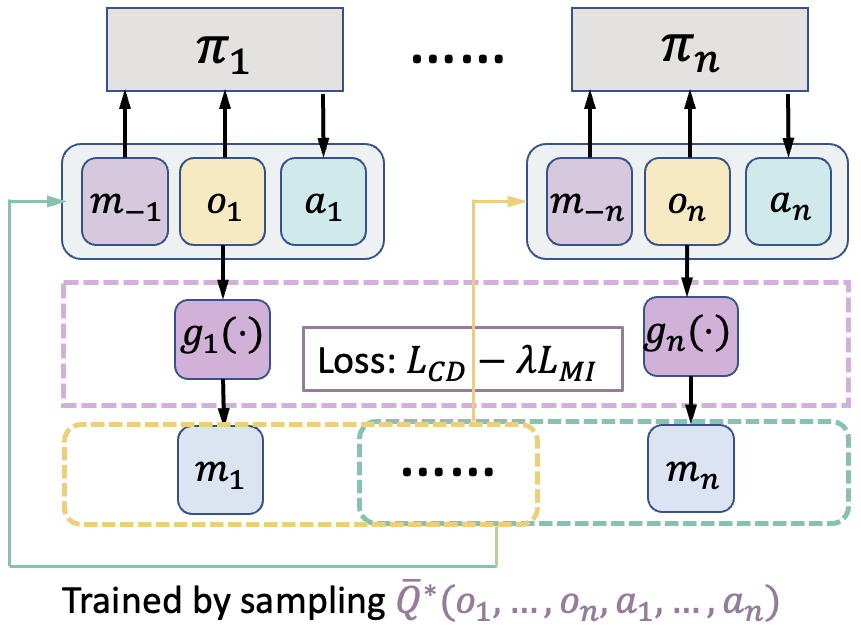}
\vspace{-0.1in}
\caption{RGMComm trains message generation function $g$ via sampled action-value vectors, shaping the cosine-distance loss function $L_{CD}$.}
\label{fig:algorithm} 
\vspace{-0.1in}
\end{figure}

 \noindent {\bf MARL training: } 
Using the actor-critic framework~\cite{maddpg}, we train agent policies and estimate the action-value function. Let $\hat{Q}_{\omega}(\boldsymbol{o},\boldsymbol{a})$ be a DNN with parameters $\omega$ (we drop time $t$ to simplify notations). We update $\omega$ through an experience replay buffer $\mathcal{R}$ containing tuples of agent experiences. The action-value function is updated by minimizing the loss function $\mathcal{L}(\omega) = E_{(\boldsymbol{o}, \boldsymbol{a}, R, \boldsymbol{o}')}[(\hat{Q}_{\omega}(\boldsymbol{o},\boldsymbol{a})-y)^2]$, where $y=R + \gamma \hat{Q}_{\omega'}(\boldsymbol{o}',\boldsymbol{a}')|_{\boldsymbol{a}'}=\pi'(\boldsymbol{o}')$.
Decisions are based on decentralized policies conditioned on partial observation and communication messages, denoted as $\pi=[\pi_i(a_i|o_i,\boldsymbol{m_{-i}})|_{\boldsymbol{m_{-i}}=\{g(o_j),\forall j \neq i\}},\forall i]$. Agents' policies are parameterized by $\{\theta_1,\dots,\theta_n\}$, and these policies are updated and executed in a decentralized manner using the policy gradient to minimize the expected return $J(\theta_i)=E[R_i]$ of each agent $i$: $\nabla_{\theta_i}J(\theta_i) =  E_{\boldsymbol{o},\boldsymbol{a} \sim \mathcal{R}}\left[\nabla_{\theta_i} \log \pi_{\theta_i}(a_i|o_i,\boldsymbol{m_{-i}}) \hat{Q}_{\omega}^{\pi_{\theta_i}} (\boldsymbol{o},\boldsymbol{a})\right]$.

\noindent {\bf Learning message generation: }
The message generation functions $g=\{g_1,\dots,g_n\}$ of all agents are approximated using DNNs parameterized by $\xi=\{\xi_1,\dots,\xi_n\}$. For each agent $j$, we first sample a random minibatch of $K_1$ samples $\mathcal{X}_{j}=(\boldsymbol{o}^{k_1},\boldsymbol{a}^{k_1},R^{k_1},\boldsymbol{o}'^{k_1})$ from the transitions recorded in replay buffer $\mathcal{R}$, which contains the observation-action pairs from all agents including agent $j$. Then we sample a set $\mathcal{X}_{-j}=(\boldsymbol{o}_{-j}^{k_2},\boldsymbol{a}_{-j}^{k_2})$ from $\mathcal{X}_{j}$, which are the top $K_2$ frequent observation-action pairs in the minibatch $\mathcal{X}_{j}$ after removing $o_j$ and $a_j$ from $\mathcal{X}_{j}$. 
Then we form the sampled trajectories by combining $(o_j,a_j)$ in $\mathcal{X}_{j}$ and $(\boldsymbol{o}_{-j},\boldsymbol{a}_{-j})$ in $\mathcal{X}_{-j}$ as $\mathcal{D}=(\boldsymbol{o}^{k_1k_2},\boldsymbol{a}^{k_1k_2},R^{k_1k_2},\boldsymbol{o}'^{k_1k_2})$. 
To obtain the action-values for clustering, we query the critic networks with $\mathcal{D}$ as the input to get the $\hat{Q}_{\omega}(o_j,\boldsymbol{o}_{-j},a_j, \boldsymbol{a}_{-j})$, which approximates action-value vectors $\bar{Q}^*(o_j)$ defined in Eq. (\ref{equ:vector_re_write_Q_1}) and Sec.~\ref{sec:theory}'s illustrative example. We use $\bar{Q}^*(o_j)$ instead of $\hat{Q}_{\omega}$ in the following part to be consistent with the theoretical results. 
The message $m_j=g_{\xi_j}(o_j)$ is updated by minimizing a Regularized Information Maximization (RIM) loss function~\cite{imsat} $\mathcal{L}(g_{\xi_j})$ in terms of $\bar{Q}^*(o_j)$:
\begin{equation} \label{eq: commloss}
    \setlength{\abovedisplayskip}{1pt}
    \setlength{\belowdisplayskip}{1pt}
    \begin{aligned}
&\mathcal{L}(g_{\xi_i}) = L_{CD} - \lambda L_{MI}, \\
&L_{CD}= \sum_{p=1}^{K_1}\sum_{q \in N_{K_3}(p)} \left[D_{cos}(\bar{Q}^*(o_j^p),\bar{Q}^*(o_j^q)\right]\|m_{j}^p-m_{j}^q\|^2, \\
&L_{MI} = I(o_{j};m_{j})=H(m_j)-H(m_j|o_j),
    \end{aligned}
\end{equation}
where $L_{CD}$ is a clustering loss in the form of  Locality-preserving loss~\cite{localitypreservingloss}. It preserves the locality of the clusters by pushing nearby data points of action-value vectors together. Inside $L_{CD}$, $o_j^p \in \mathcal{X}_j, p=1,\dots,K_1$ is the sampled observation $o_j$, $N_{K_3}(p)$ is the set of the $K_3$ nearest neighbors of $\bar{Q}^*(o_j^p)$, with $D_{cos}$ (the cosine-distance between $\bar{Q}^*(o_j^p)$ and its neighbor $\bar{Q}^*(o_j^q)$) as the metric to define the neighbors. The mutual information loss $L_{MI}$ measures the mutual information between observation $o_{j}$ and message $m_{j}$. 
Here we measure the mutual information loss as the difference between the marginal entropy $H(m_j)$ and conditional entropy $H(m_j|o_j)$ to ensure uniformly-sized clusters and more unambiguous cluster assignments~\cite{imsat}.
Action-value vectors are normalized and processed through the activation function for improved clustering outcomes. The choice of activation function is discussed in Sec.~\ref{sec:evaluation}. The Pseudo-Code of training the RGMComm is in Appendix B.



\section{Experiments}
\label{sec:evaluation}

We test RGMComm on continuous state space Dec-POMDP problems (using Multi-Agent Particle Environment~\cite{maddpg}). We compare it against strong baselines: 
\textbf{(1).} \textbf{Continuous communication} using SARNet~\cite{sarnet}, which employs memory-based attention for uninterrupted messages.
\textbf{(2).} One-hot \textbf{discrete communication} using CommNet~\cite{commnet}, MADDPG~\cite{maddpg}, IC3Net~\cite{ic3net}, TarMAC~\cite{tarmac}, and SARNet~\cite{sarnet}. We enable their discrete communication mode, where agents send discrete unstructured vectors. These baselines use Gumbel softmax for gradients and backpropagation.
\textbf{(3).} \textbf{Vector-Quantized} Variational Information Bottleneck (VQ-VIB)~\cite{tucker2022trading}, the recent method using autoencoder and quantization for communication.
All baselines train from scratch with the same hyperparameters as RGMComm, repeated 3 times with different seeds. Appendix C has more details on baselines, architectures, hyperparameters, and environments.


\begin{figure*}[ht]
     \centering
     \begin{subfigure}[b]{0.325\textwidth}
         \centering
         \includegraphics[width=\textwidth]{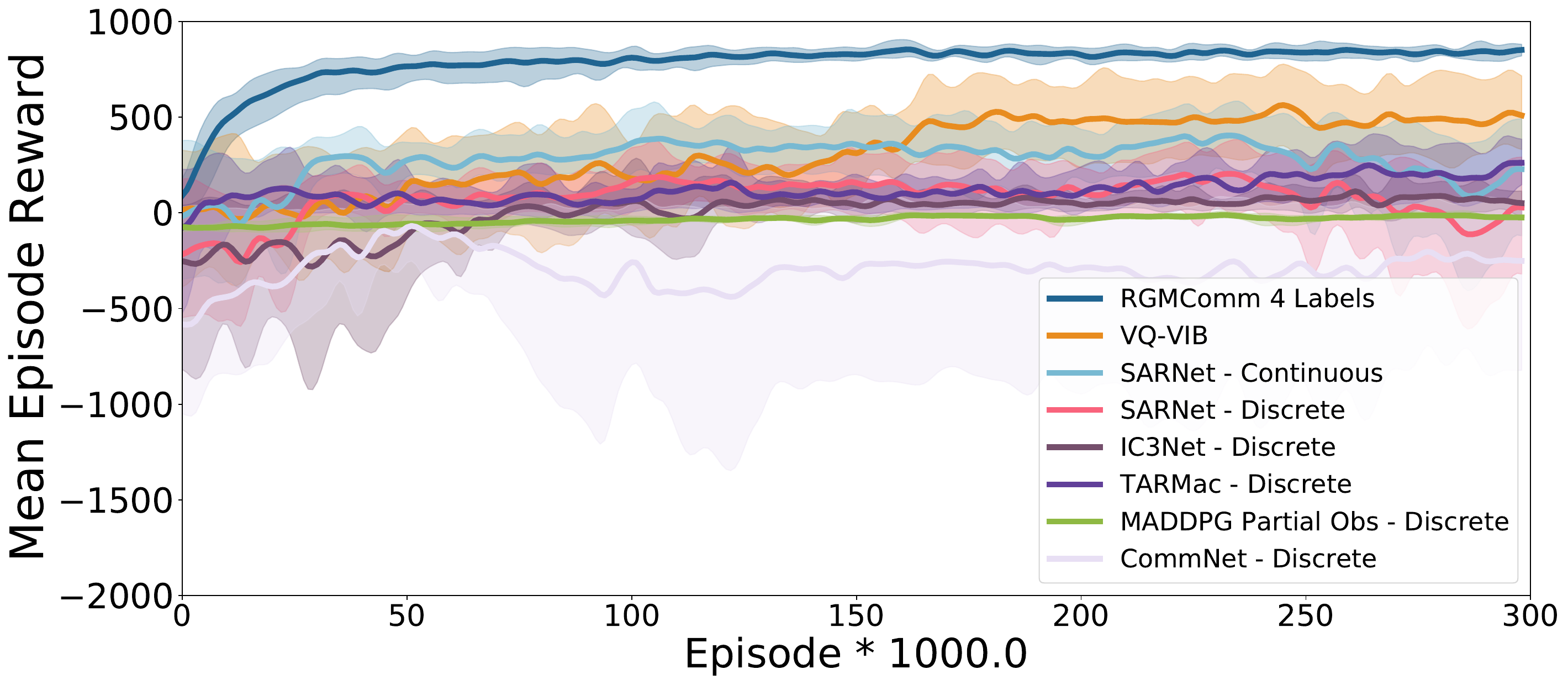}
         \caption{Predator-Prey with 2 agents}
         \label{fig:simple_tag_2_baseline}
     \end{subfigure}
     \hfill
     \begin{subfigure}[b]{0.325\textwidth}
         \centering
         \includegraphics[width=\textwidth]{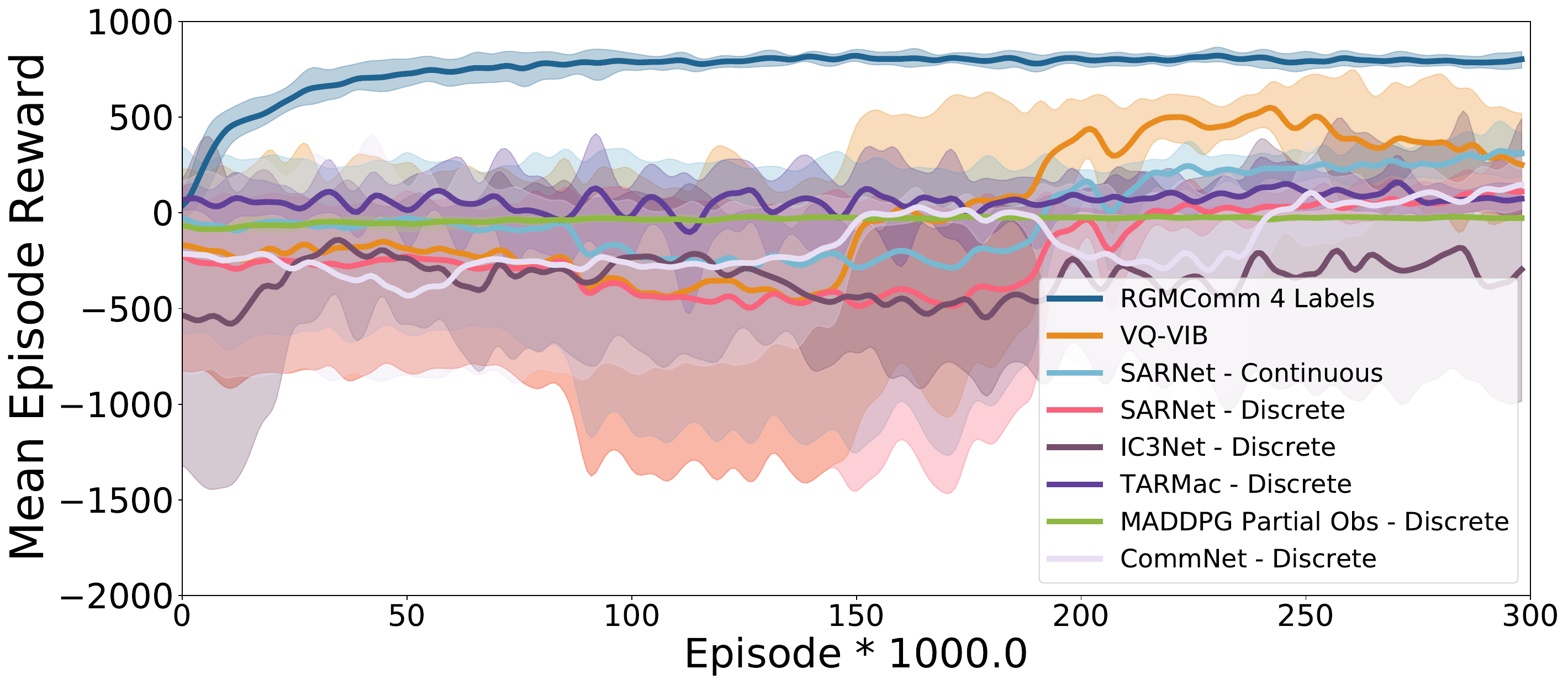}
        \caption{Predator-Prey with 3 agents}
         \label{fig:simple_tag_3_baseline}
     \end{subfigure}
     \hfill
     \begin{subfigure}[b]{0.325\textwidth}
         \centering
         \includegraphics[width=\textwidth]{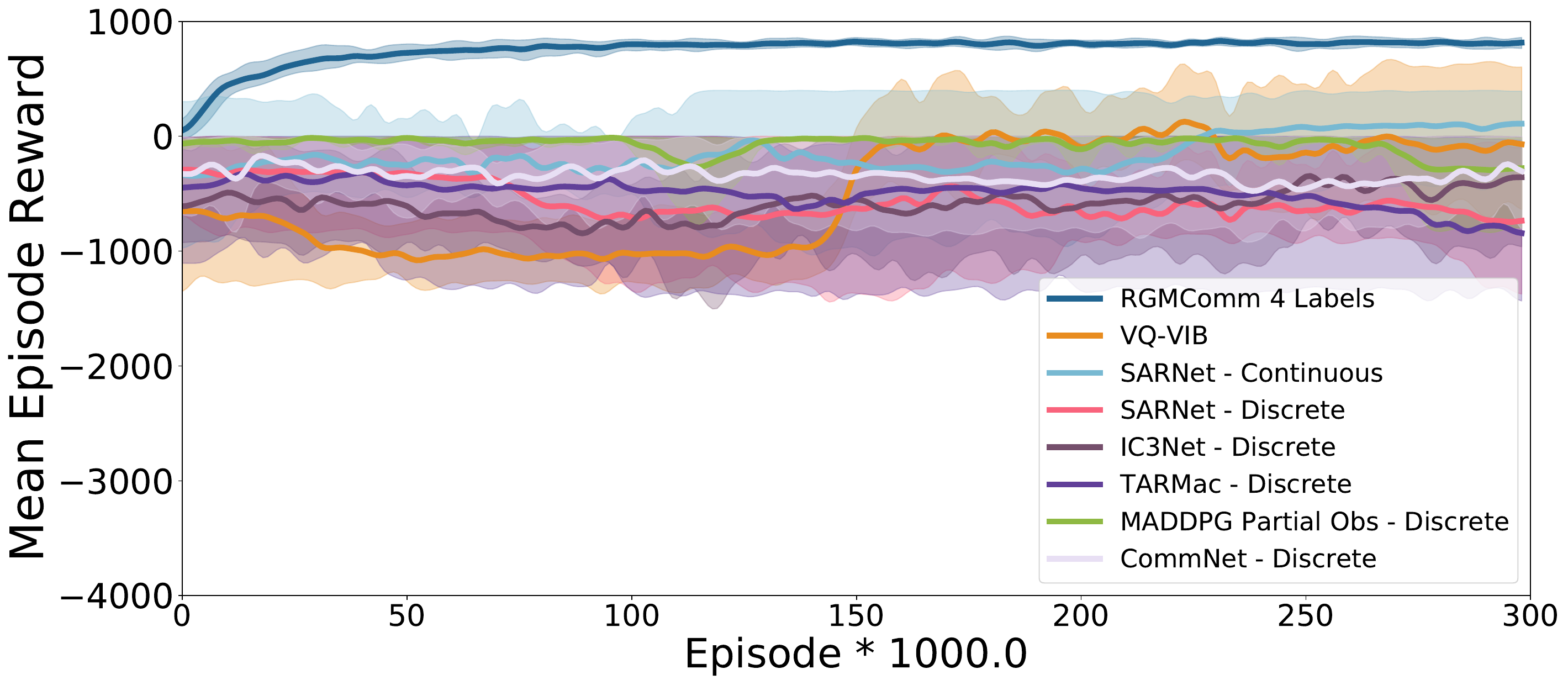}
        \caption{Predator-Prey with 6 agents}
         \label{fig:simple_tag_6_baseline}
     \end{subfigure}
     \hfill
     \begin{subfigure}[b]{0.325\textwidth}
         \centering
         \includegraphics[width=\textwidth]{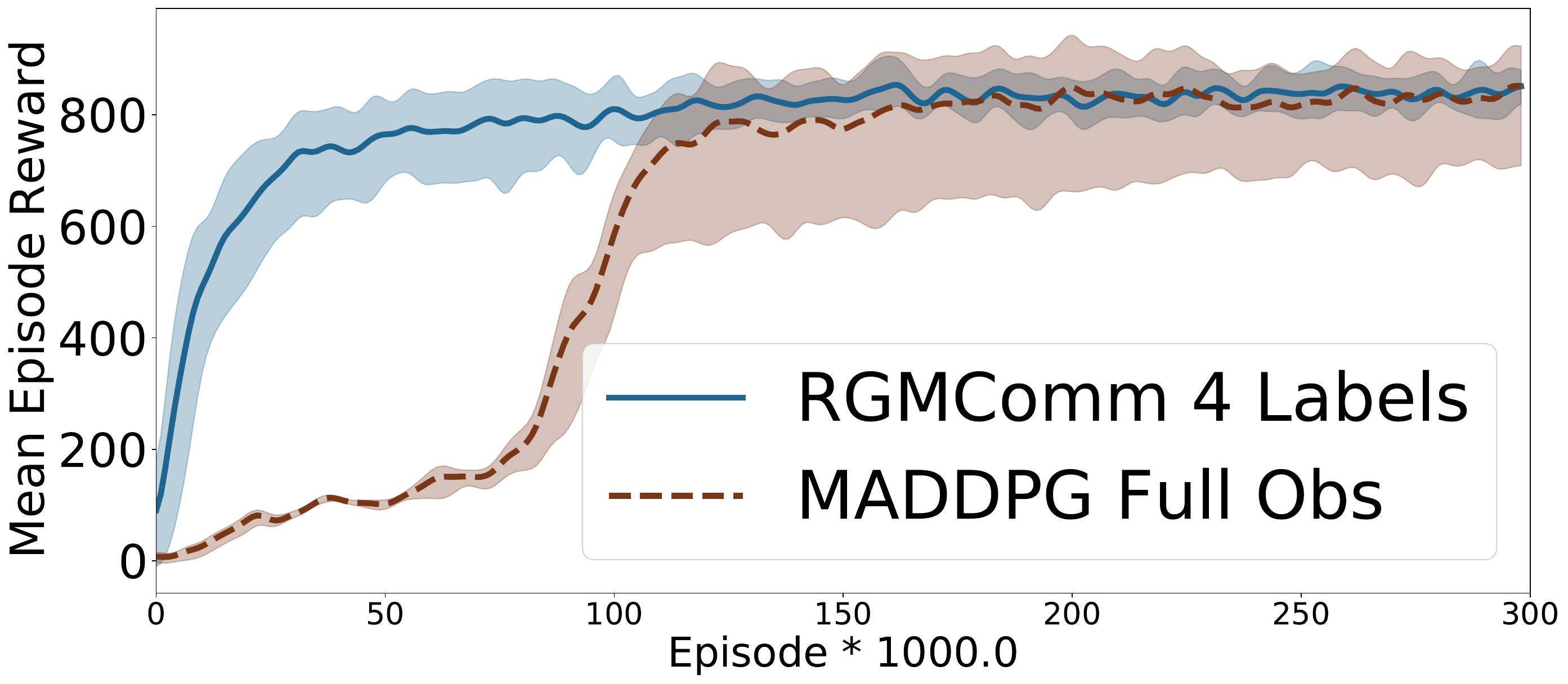}
        \caption{Predator-Prey with 2 agents}
         \label{fig:simple_tag_2_full}
     \end{subfigure}
     \hfill
     \begin{subfigure}[b]{0.325\textwidth}
         \centering
         \includegraphics[width=\textwidth]{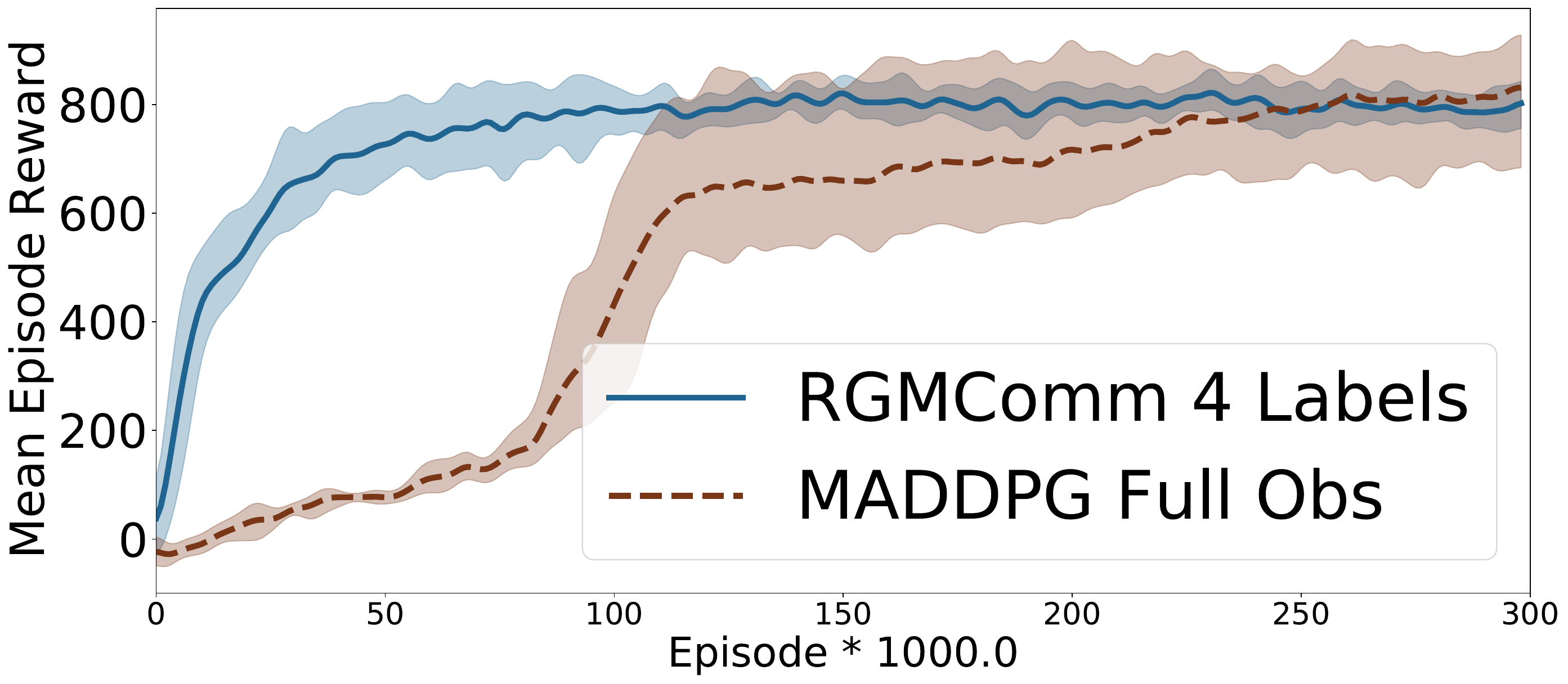}
        \caption{Predator-Prey with 3 agents}
         \label{fig:simple_tag_3_full}
     \end{subfigure}
     \hfill
     \begin{subfigure}[b]{0.325\textwidth}
         \centering
         \includegraphics[width=\textwidth]{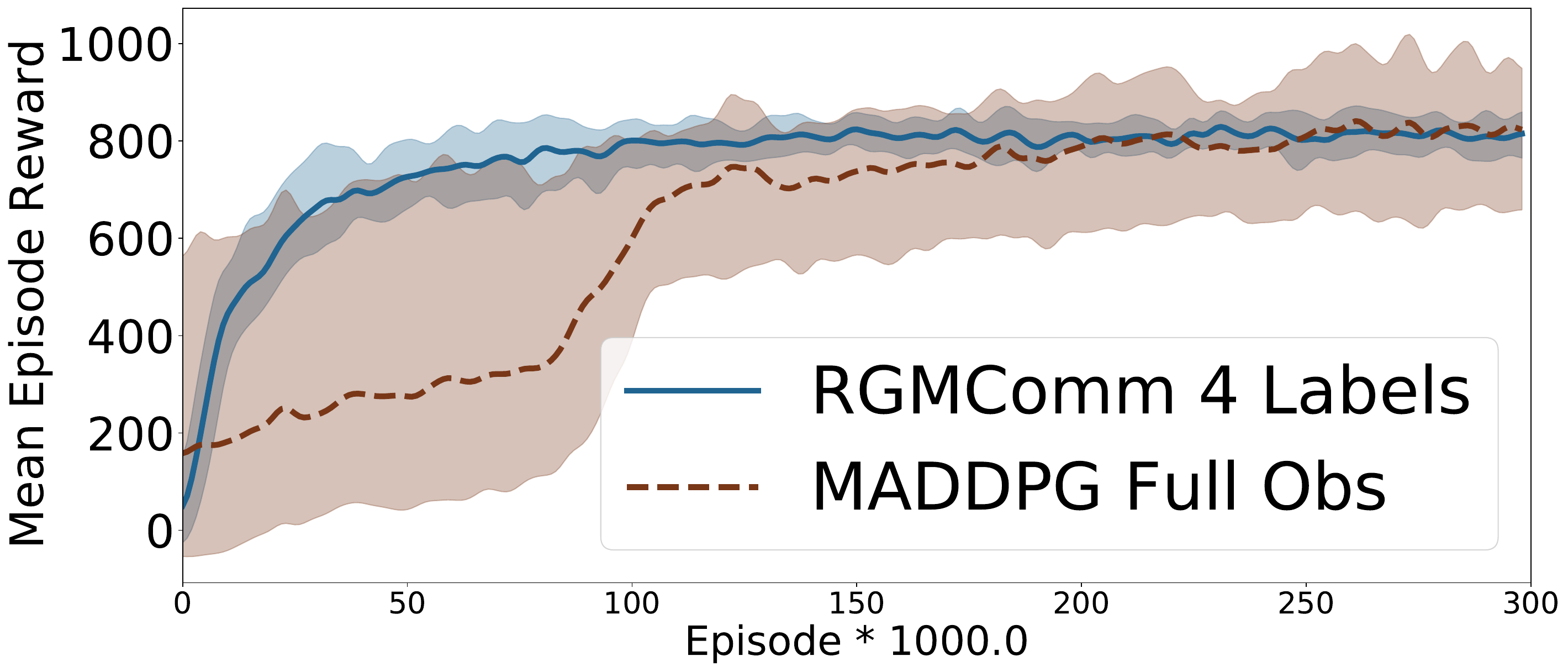}
        \caption{Predator-Prey with 6 agents}
         \label{fig:simple_tag_6_full}
     \end{subfigure}

    \caption{Evaluation on $n$-RGMComm-trained predators in Predator-Prey tasks: (a)-(c) Comparing RGMComm with baselines with communication: RGMComm (blue curve) converges to a higher mean episode reward than all the baselines. (d)-(f) Comparing RGMComm with full-observability policy: RGMComm (blue curve) achieves nearly optimal mean episode reward (brown dashed curve) in all scenarios with varying numbers of agents, which illustrates its ability to minimize the return gap. }
\label{fig:simple_tag_2_3_6} 
\vspace{-0.2in}
\end{figure*}

\textbf{Predator-Prey: }In the Predator-Prey scenario, predators are trained to collaborate to surround and seize \textbf{prey who move randomly}. 
We trained RGMComm with different alphabet sizes $|M|$ and $N=2,3,6$ \textbf{predators}. 
Figures \ref{fig:simple_tag_2_baseline} to \ref{fig:simple_tag_6_baseline} show that RGMComm outperforms all baselines with 2, 3, and 6 cooperative \textbf{predator} agents. The figures display learning curves of 300,000 episodes in terms of the mean episode reward, averaged over all evaluating episodes (10 episodes evaluated every 1000 episodes).
Figures \ref{fig:simple_tag_2_full} to \ref{fig:simple_tag_6_full} show the learning curves of average returns under our RGMComm policy compared with an ideal full-observability policy $\pi^*$.
RGMComm achieves near-optimal mean episode rewards in all scenarios with varying numbers of agents, which demonstrates its ability to minimize the return gap. 
Additionally, the message generation training process uses an online clustering algorithm that does not require a large volume of data, leading to faster convergence that scales well with more agents.

\begin{figure}[th!]
\centering
\vspace{-0.1in}
\includegraphics[width=0.45\textwidth]{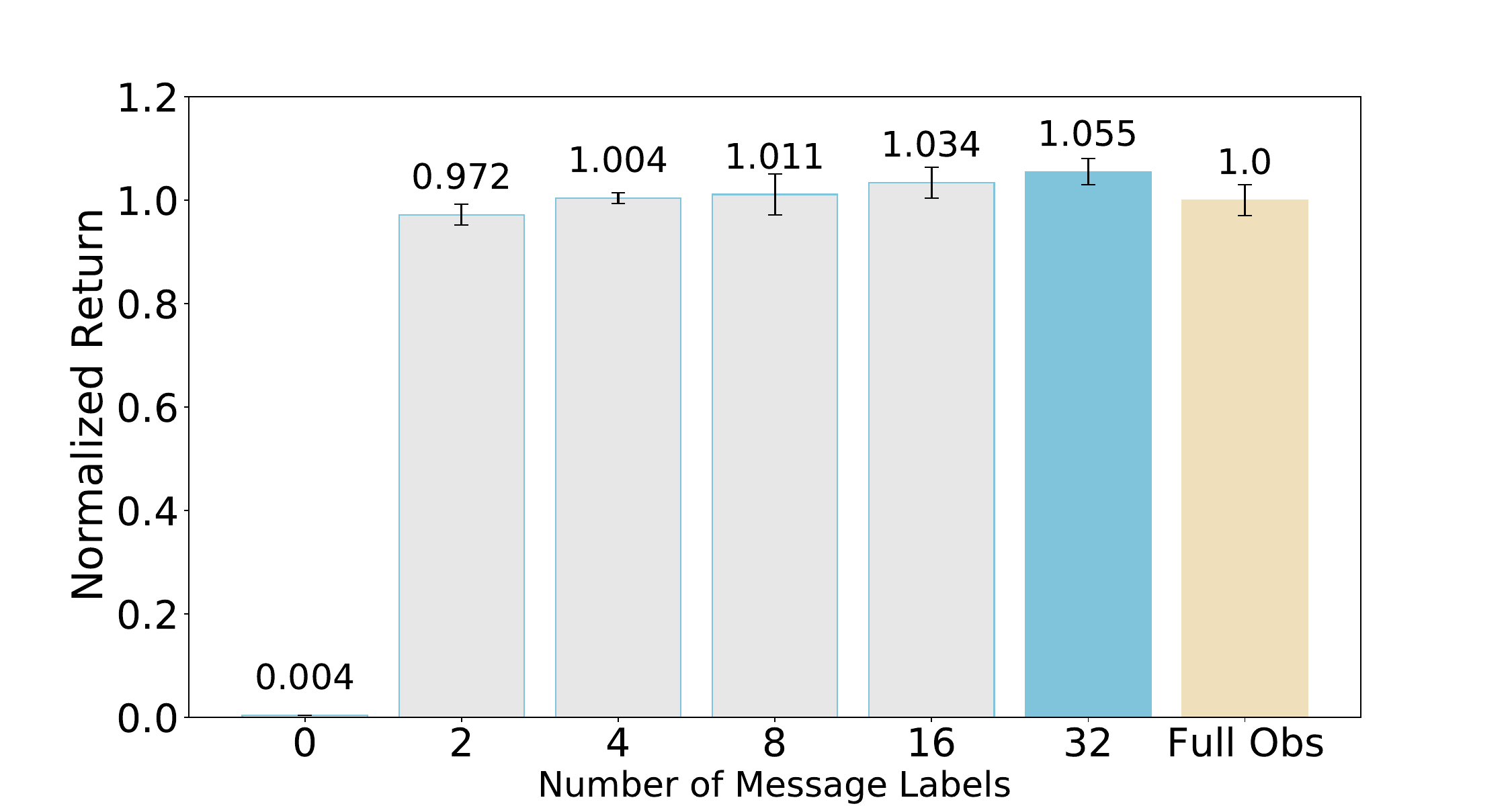}
\caption{The normalized mean episode reward increases as the total number of message labels increases.}
\label{fig:simple_tag_2_reward_labels}
\vspace{-0.1in}
\end{figure}

Fig~\ref{fig:simple_tag_2_reward_labels} shows how the total number of message labels affects RGMComm's performance. We compare the normalized mean episode reward achieved by RGMComm with $|M|=0,2,4,8,16,32$ message labels. As expected, the mean episode reward increases with the number of message labels. Remarkably, with only $|M|=2$ message labels (i.e., 1-bit communication), RGMComm achieves nearly optimal mean episode reward. With more than $|M|=4$ message labels, RGMComm's reward exceeds that of the policy with full observability, 
since the message generation function learned from the critic provides a succinct, discrete representation of the optimal action-value structure, leading to less noisy communication signals and allowing agents to discover more efficient decision-making policies conditioned on the message labels.


\begin{figure*}[ht]
  \centering
  \begin{subfigure}[b]{0.325\textwidth}
    \centering
    \includegraphics[width=\textwidth]{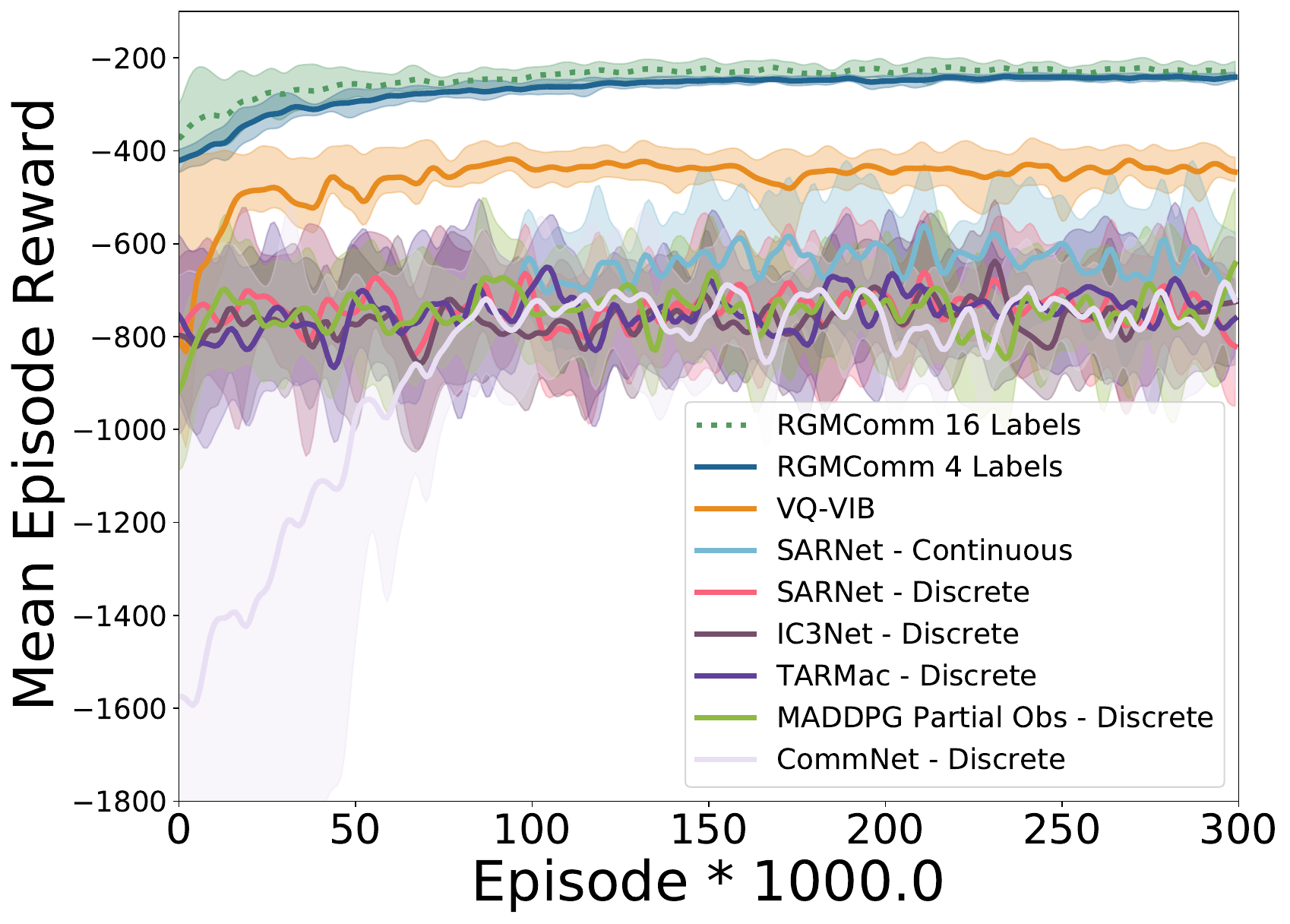}
    \caption{RGMComm using 4 and 16 message labels both have higher convergence values than all baselines with partial observability in the Cooperation Navigation task.}
    \label{fig:simple_spread_2_4labels_16labels_baselines}
  \end{subfigure}
  \hfill
  \begin{subfigure}[b]{0.325\textwidth}
    \centering
    \includegraphics[width=\textwidth]{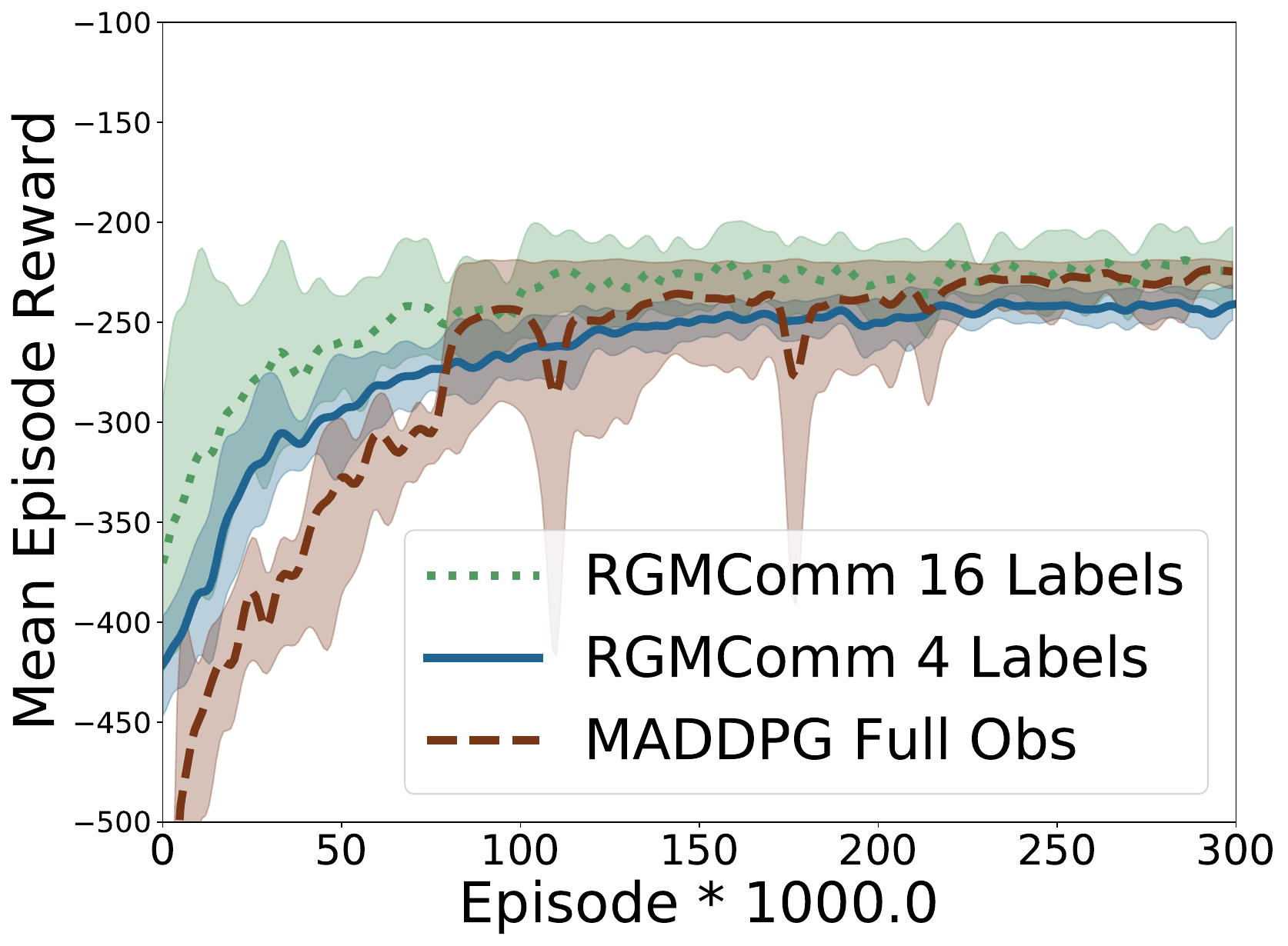}
    \caption{ RGMComm using 4 or 16 message labels both lead to almost zero return gap and RGMComm achieves even higher returns than full observability policy with 16 labels.}
    \label{fig:simple_spread_2_4labels_16labels_full}
  \end{subfigure}
  \begin{subfigure}[b]{0.335\textwidth}
    \centering
    \includegraphics[width=1.05\textwidth]{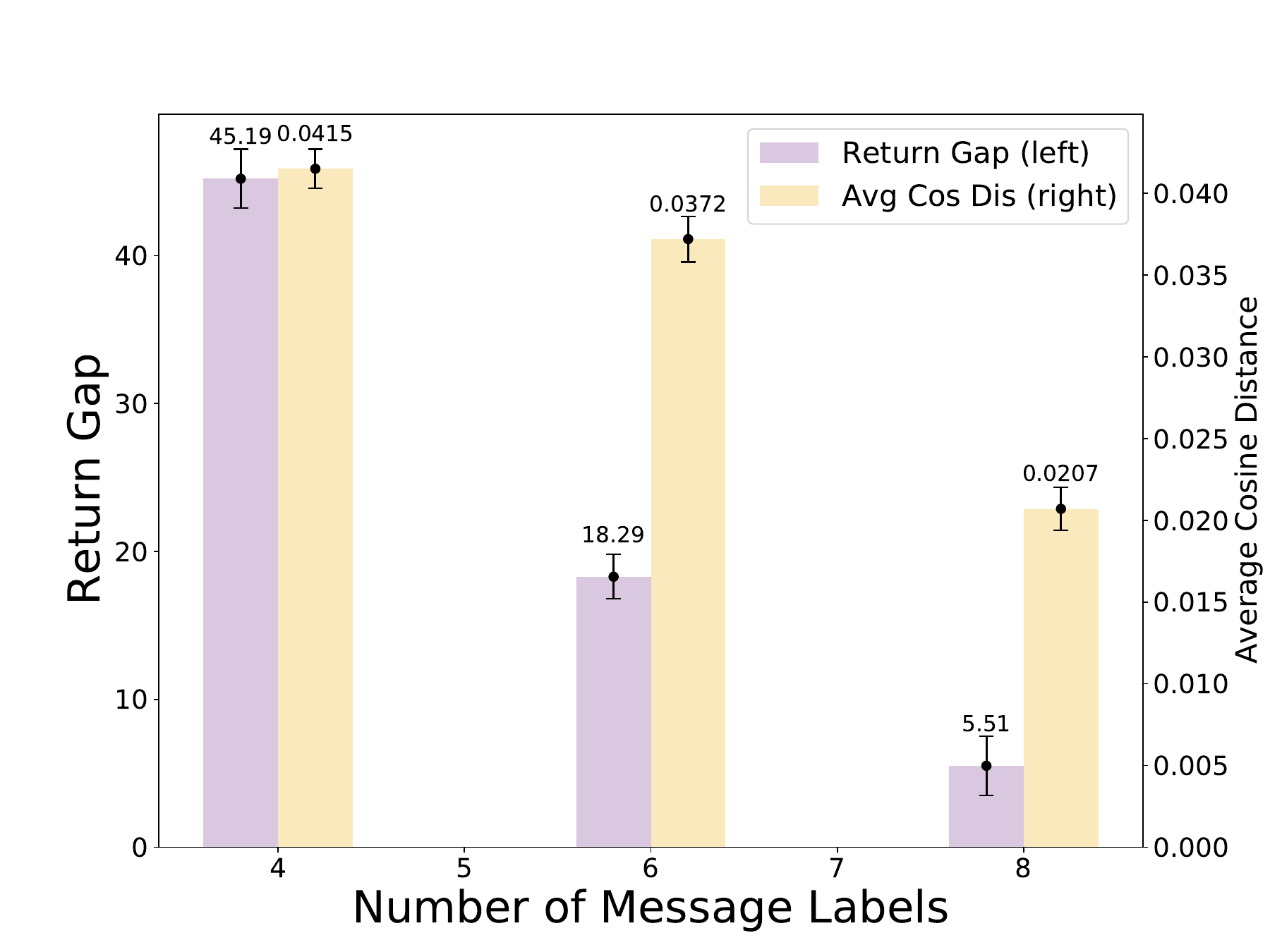}
    \caption{The return gap (left) is bounded by average cosine-distance (right) and diminishes as average cosine-distance (right) decreases due to the use of more message labels.}
    \label{fig:simple_spread_2_regret_cosdis}
  \end{subfigure}
  \caption{Cooperation Navigation Experiments}
  \vspace{-0.1in}
\end{figure*}

\textbf{Cooperative Navigation: }We train RGMComm using 4 and 16 message labels for Cooperative Navigation, which tasks agents with collaboratively navigating to specific targets without collisions.
Fig.~\ref{fig:simple_spread_2_4labels_16labels_baselines} shows that RGMComm converges faster and to a much higher reward than all the baselines. Figure~\ref{fig:simple_spread_2_4labels_16labels_full} shows that RGMComm using four message labels leads to almost zero return gap and achieves nearly-optimal returns in both cases. With 16 message labels, RGMComm obtains a higher reward than the policy with full-observability, demonstrating that discrete communication allows more efficient policies to be learned in POMDP.

Figure~\ref{fig:simple_spread_2_regret_cosdis} plots the average cosine-distance $\epsilon$ across clusters and the corresponding return gap (between RGMComm and the full-observability policy) using 4, 6, and 8 message labels. 
The numerical results justify Thm.~\ref{thm:thm_1}'s analysis: the return gaps are indeed bounded by $O(\sqrt{\epsilon}nQ_{\rm max})$, and the average return gap diminishes as the average cosine-distance over all clusters decreases due to using more message labels, validating Remark~\ref{remark:label_M}.


\begin{figure}[th]
     \centering
     \begin{subfigure}[b]{0.22\textwidth}
         \centering
         \includegraphics[width=1.04\textwidth]{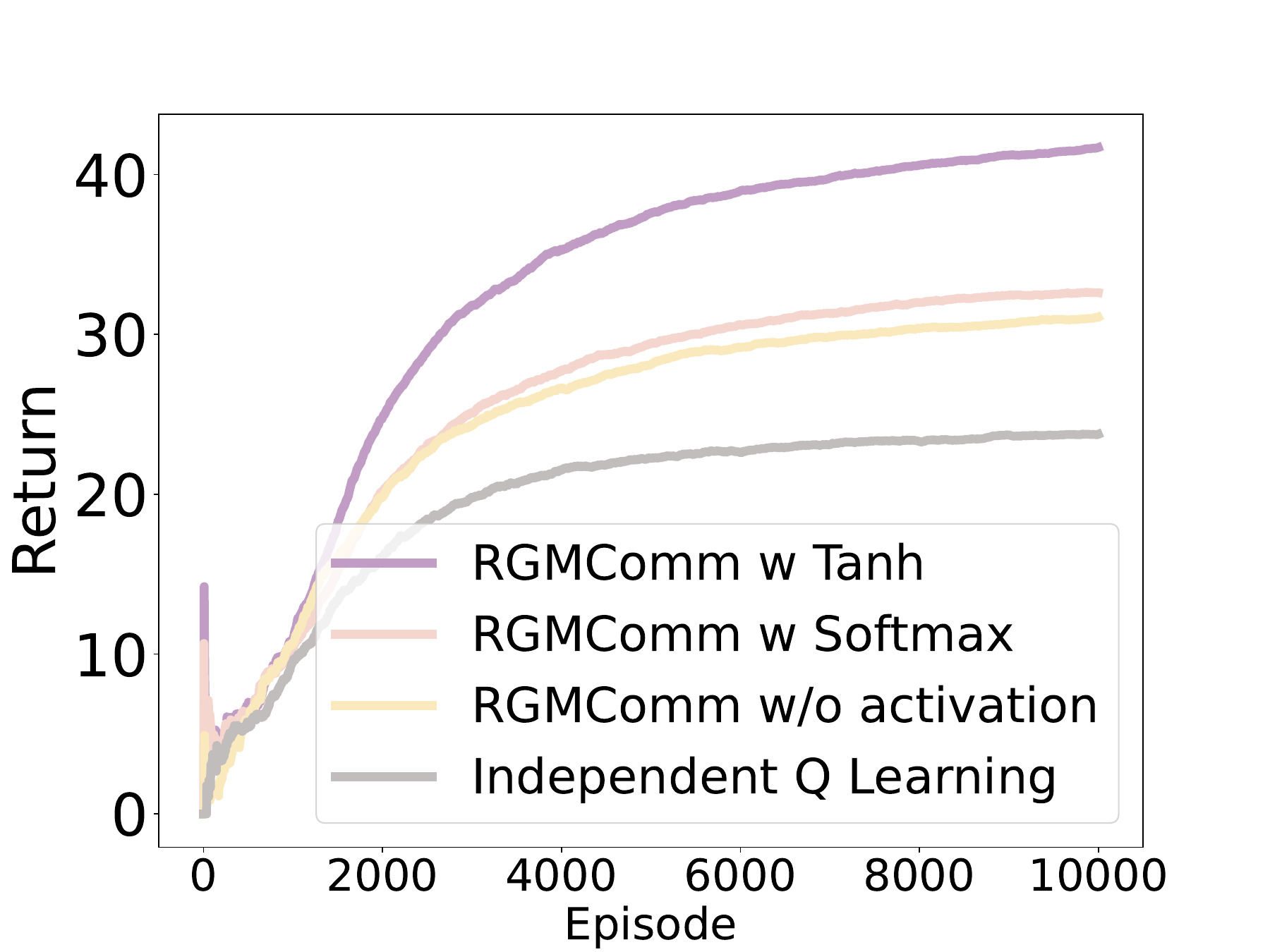}
         \caption{RGMComm with baselines}
         \label{fig:maze_results_baselines}
     \end{subfigure}
     \hfill
     \begin{subfigure}[b]{0.22\textwidth}
         \centering
         \includegraphics[width=1.04\textwidth]{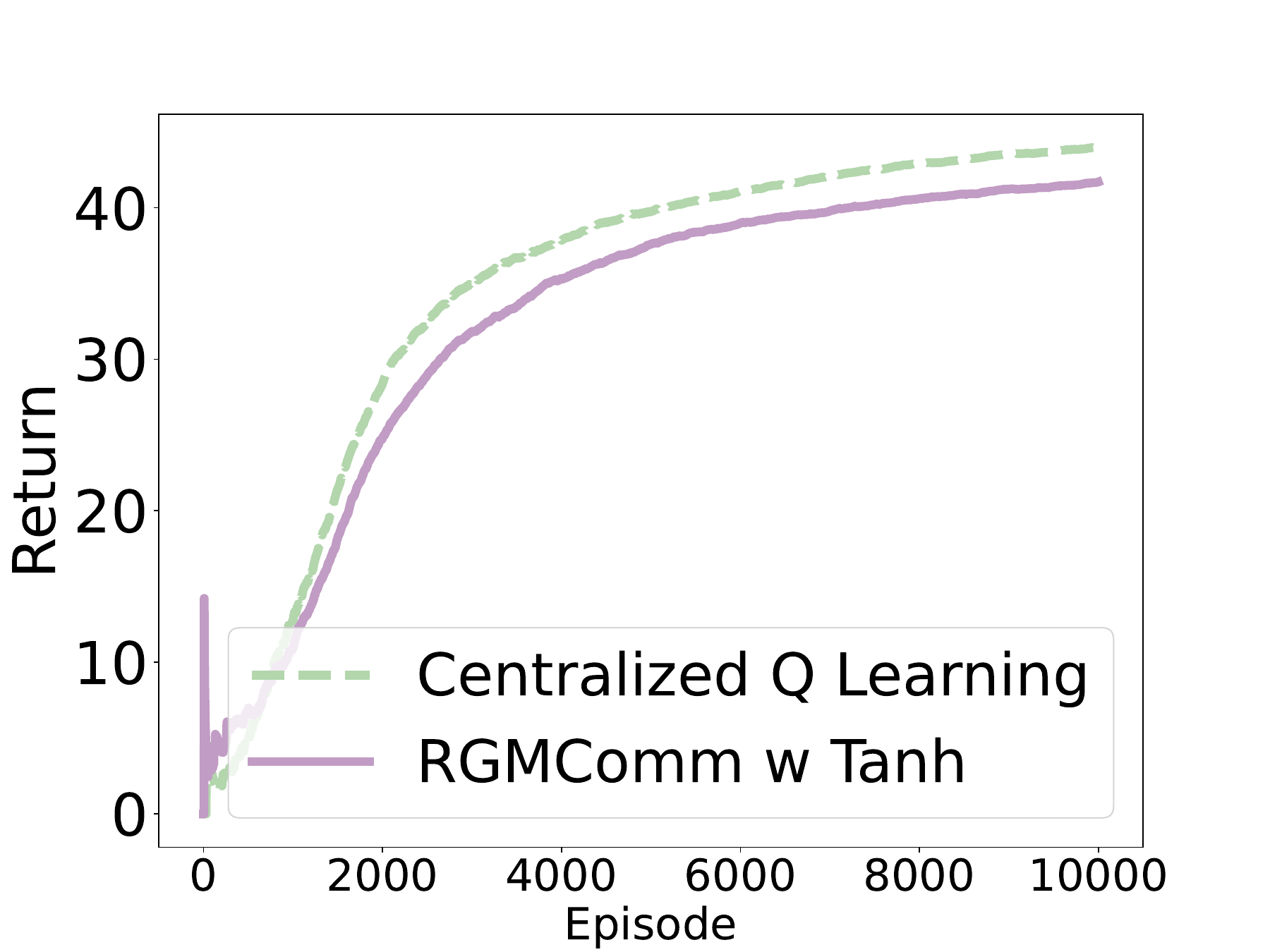}
        \caption{RGMComm with full obs}
         \label{fig:maze_results_full}
     \end{subfigure}
    \caption{(a): RGMComm with Tanh activation function (purple line) outperforms all baselines with other activation functions; (b): RGMComm with Tanh (purple line) converges to almost the optimal mean episode reward achieved by centralized Q learning (green dashed line).}
\vspace{-0.20in}
\end{figure}


\textbf{Ablation Study: }We conduct an ablation study on RGMComm, investigating the effect of different activation functions on normalizing action value vectors $\bar{Q}^*(o_j)$. This choice impacts the clustering's distance metrics. We compare the Softmax, hyperbolic tangent (Tanh), and no activation function approaches in our RGMComm algorithm against two baselines: independent Q learning (no communication) and centralized Q learning (with full observability). Figures~\ref{fig:maze_results_baselines} and~\ref{fig:maze_results_full} demonstrate RGMComm with Tanh achieves higher returns, approaching near-optimal performance similar to the full observability algorithm. This result stems from the S-shape functions pushing larger action-values towards 1. This behavior assists the cosine-distance-based RGMComm to group action values likely maximized at the same actions.


\begin{figure}[th]
     \centering
     \begin{subfigure}[b]{0.2\textwidth}
         \centering
         \includegraphics[width=\textwidth]{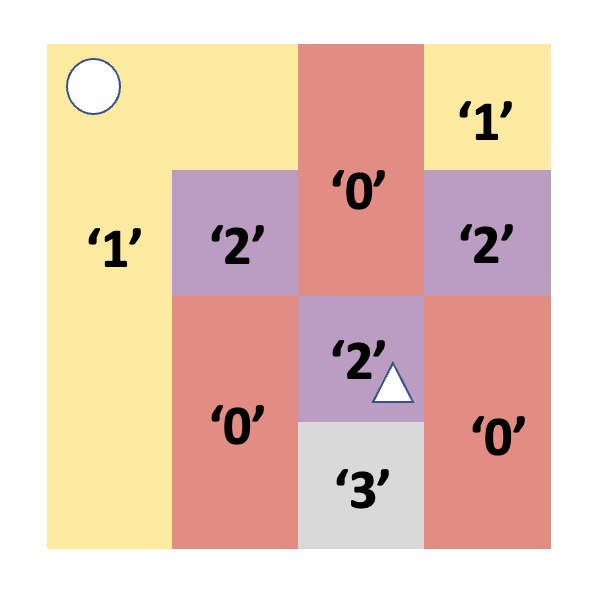}
         \caption{Label-position (agent 1)}
         \label{fig:agent1_maze_position_label}
     \end{subfigure}
     \hfill
     \begin{subfigure}[b]{0.2\textwidth}
         \centering
         \includegraphics[width=\textwidth]{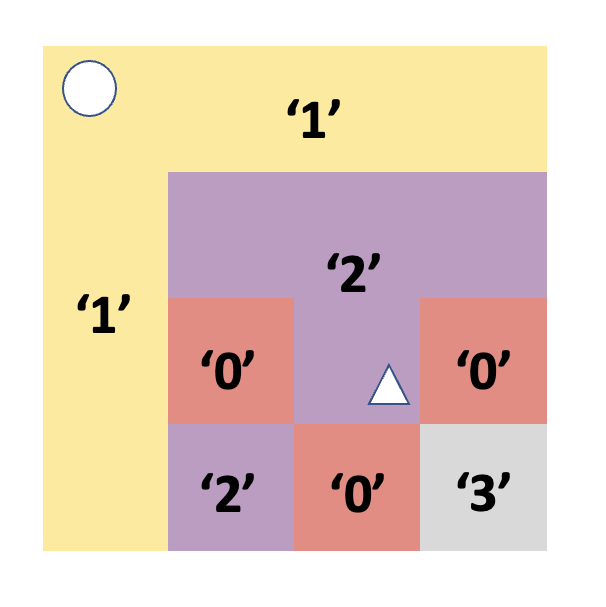}
        \caption{Label-position (agent 2)}
         \label{fig:agent2_maze_position_label}
     \end{subfigure}
     \begin{subfigure}[b]{0.21\textwidth}
         \centering
         \includegraphics[width=\textwidth]{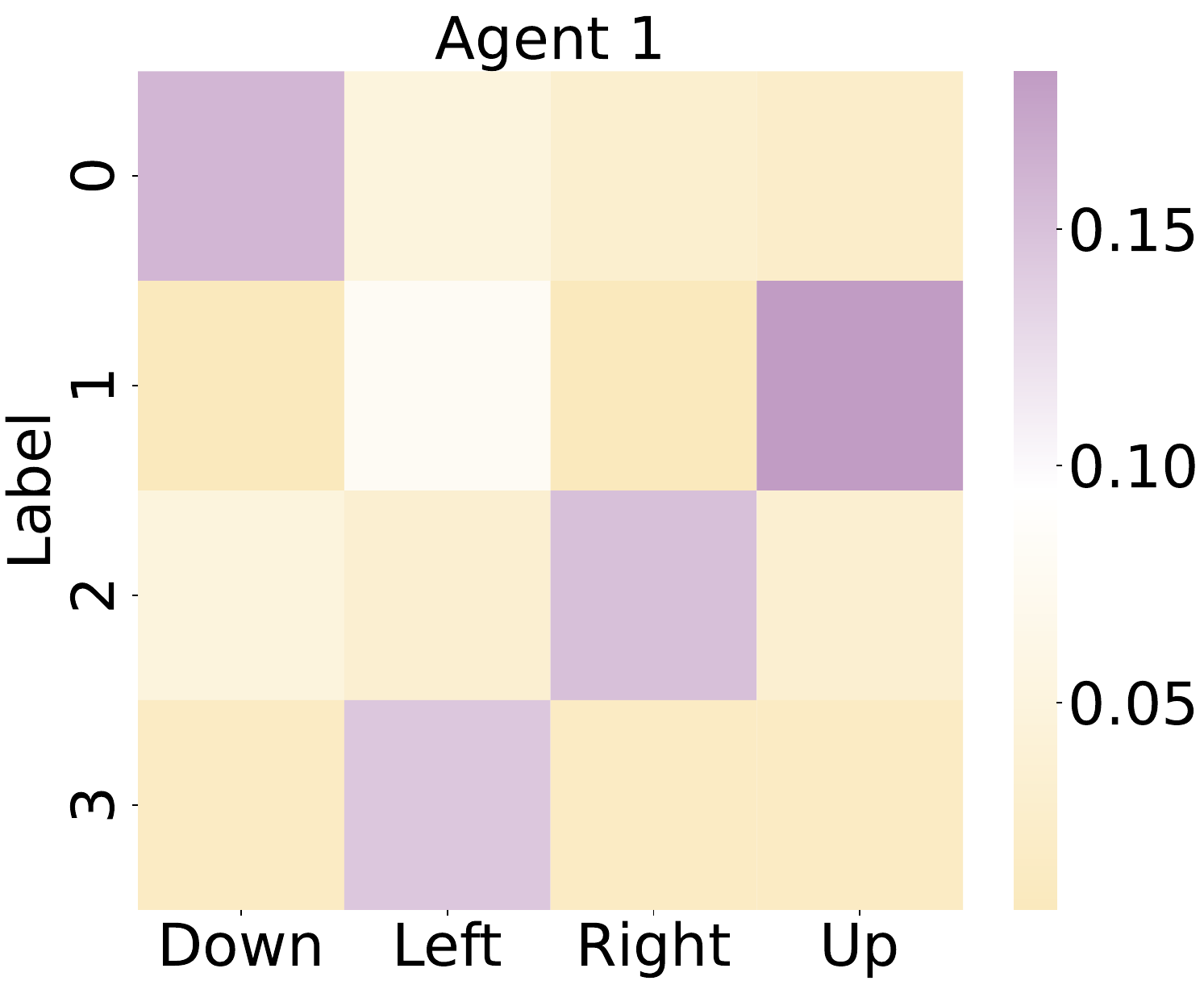}
         \caption{Label-action (agent 1)}
         \label{fig:agent1_maze_action_label}
     \end{subfigure}
     \hfill
     \begin{subfigure}[b]{0.22\textwidth}
         \centering
         \includegraphics[width=\textwidth]{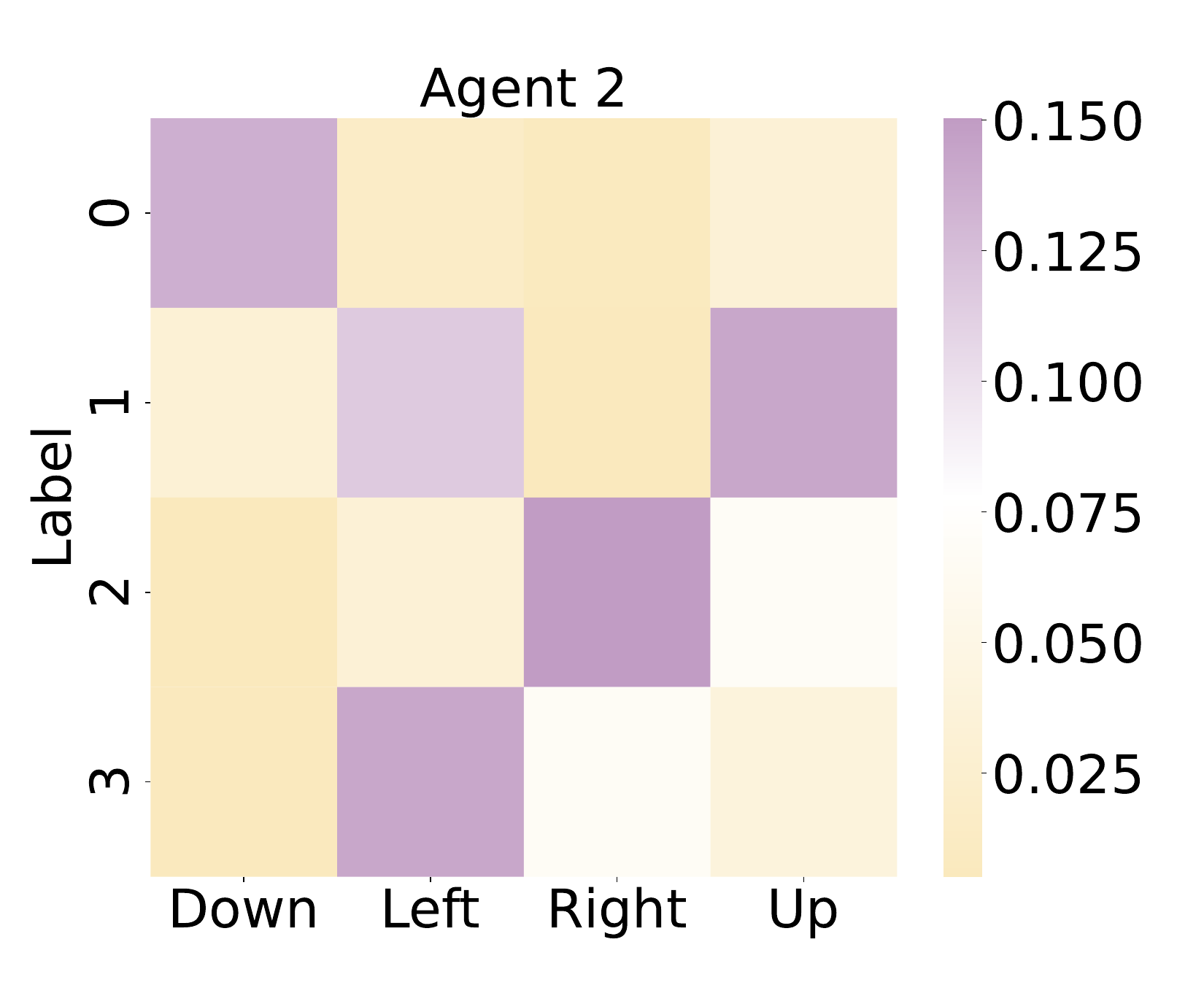}
         \caption{Label-action (agent 2)}
         \label{fig:agent2_maze_action_label}
     \end{subfigure}
    \caption{(a)-(b): agents are more likely to send label `1' when closer to higher-reward target (circle), while more likely to send label `2' when closer to lower-reward target (triangle); (c)-(d): both agents are more likely to take action `down', `up', `right', `left' conditioned receiving label `0', `1', `2', and `3', respectively.}
    \label{fig:maze_label}
\vspace{-0.2in}
\end{figure}

\textbf{Communication Message Interpretations: }The discrete message labels in RGMComm are easily interpreted, demonstrated in a grid-world maze environment for visual clarity. In Figures~\ref{fig:agent1_maze_position_label} and~\ref{fig:agent2_maze_position_label}, message label frequencies emitted by agents in different positions during training are visualized. Positions are depicted using colors: `red', `yellow', `purple', and `grey' represent higher probabilities of sending message labels `0', `1', `2', and `3', respectively. The proximity to high-reward (circle) or low-reward (triangle) targets influences agents to send specific labels.


Figures~\ref{fig:agent1_maze_action_label} and \ref{fig:agent2_maze_action_label} visualize the correlation between actions and received message labels for RGMComm agents. Actions tend to align with labels, facilitating effective coordination. For instance, integrating the four figures in Fig.~\ref{fig:maze_label}, we observe that when an RGMComm agent approaches the upper left high-reward target, it transmits label `1' to guide other agents to move `up' or `left', effectively steering them towards this target. In summary, RGMComm provides interpretable discrete communication, enabling task-specific interpretations during training.

\section{Conclusion}

This paper introduces a discrete communication framework for Multi-Agent Reinforcement Learning (MARL) employing any finite-size discrete alphabet. Our approach views message generation as an online clustering problem and quantifies the optimal return gap between an ideal policy and a communication-enabled policy with a closed-form upper-bound. We propose a novel class of MARL communication algorithm, RGMComm, designed to minimize the return gap using few-bit messages. RGMComm significantly outperforms state-of-the-art baselines and achieves nearly-optimal returns. 
Further research will delve into more use cases.



\newpage
\section*{Acknowledgements}
This work was funded by ONR grants N00014-23-1-2532, N00014-23-1-2850, CISCO Research Award and also in part by the Office of Naval Research under Grant W911NF19110036. Any opinions, findings, and conclusions or recommendations expressed in this material are those of the author(s) and do not necessarily reflect the views of the Office of Naval Research.

\bigskip
\bibliography{aaai24}

\newpage
\appendix
\onecolumn
\section{Theoritical Content} \label{appendix:proofs}
\subsection{Notations}
We summarize all notations here for ease of reading.

\begin{itemize} 
    \item $m_j$: $m_{j} = g(o_j)$ is the message labels generated by RGMComm algorithm with observation $o_j$ as the input.
    \item $\boldsymbol{m_{-i}}$: $\boldsymbol{m_{-i}}=\{m_{j},\forall j \neq i\}$ be the collection of all messages received by agent $j$.   
    \item $\pi^*$: $\pi^*=[\pi_i^*(a_i|o_1,\ldots,o_n),\forall i]$ is the optimal joint policy of all agents with full observability.
    \item $\pi$: $\pi=[\pi_i(a_i|o_i,\boldsymbol{m_{-i}}),\forall i]$ is a communication-enabled, partially-observable policy. From the view of agent $i$, each unavailable observation of other agents $j \neq i$ is replaced by a message label $m_j=g(o_j)$ generated based on local observation $o_j$ of agent $j$ and send to agent $i$.
    \item $J(\pi)$: $J(\pi) =\lim_{T \to \infty} (1/T) E_{\pi} [{\sum_{t=0}^T R_{t}}]$ is the average expected return of a policy $\pi$;
    \item $J(\pi^*) - J(\pi,g)$: is the return gap between an ideal policy $\pi^*=[\pi_i^*(a_i|o_1,\ldots,o_n),\forall i]$ with full observability and a partially-observable policy with communications $\pi=[\pi_i(a_i|o_i,\boldsymbol{m_{-i}}),\forall i]$ where message labels $\boldsymbol{m_{-i}}=\{m_{j}=g(o_j), \forall j \neq i\}$. The communication-enabled policy $\pi$ and communication function $g$ can be trained simultaneously.
    \item $\pi^*_{(j)}$ and $\pi_{(j)}$: 
these two policies are identical in the observability and policies of agents $i\neq j$, except that agent $j$'s policy is conditioned on communication messages $m_{j}$, while observability of all other agents $i\neq j$ remains unchanged. 
In detail, it means that $\pi^*_{(j)}$ is the optimal policy conditioned on complete observations $\boldsymbol{o} = [o_1,\dots,o_j,\dots,o_n]$ including observation $o_j$ of agent $j$, and $\pi_{(j)}$ is the policy conditioned on partial observations $\boldsymbol{o} = [o_1,\dots,o_{j_1}, m_{j}, o_{j+1}\dots,o_n]$ without knowing observation $o_j$ but including a communication message label $m_{j}=g(o_j)$ generated by agent $j$.
We use these two policies as auxiliary policies to help the theoretical proof.
\item $\bar{Q}^*(o_j)$: $\bar{Q}^*(o_j)=[\widetilde{Q}^*(o_{-j},o_j), \forall{o_{-j}}]$ is the action value vector corresponding observation $o_j$,
where $o_{-j}$ are the observations of all other agents.
\item $\widetilde{Q}^*(o_{-j},o_j)$: this is a vector of action values weighted by marginalized visitation probabilities $d_{\mu}^{\pi}(o_{-j}|o_j)$ and corresponding to different actions:
\begin{equation} \label{equ:vector_re_write_Q_1_0_app_notation}
   \begin{aligned}
    \widetilde{Q}^*(o_{-j},o_j)=[Q^*(o_{-j},o_j,\boldsymbol{a})\cdot d_{\mu}^{\pi}(o_{-j}|o_j), \forall \boldsymbol{a}], \forall{o_{-j}} \sim d_{\mu}^{\pi}(o_{-j}|o_j),\forall{\boldsymbol{a}} \sim \mu^{\pi}(o_{-j},o_j,\boldsymbol{a}).
    \end{aligned}
\end{equation}
Here $\mu^{\pi}(\boldsymbol{o},\boldsymbol{a})$ denotes the observation-action distribution given a policy $\pi$, which means $\mu^{\pi}(\boldsymbol{o},\boldsymbol{a}) = d_{\mu}^{\pi}(\boldsymbol{o})\pi(\boldsymbol{a}|\boldsymbol{o})$.
\item $d^{\pi}_{\mu}(\boldsymbol{o})$: this is the $\gamma$-discounted visitation probability under policy $\pi$ and initial observation distribution $\boldsymbol{o}(t=0) \sim \mu$,

\item $\bar{H}(m)$: $\bar{H}(m)=\sum_{o_j\sim m}\bar{d}_m(o_j)\cdot\bar{Q}^*(o_j)$ is the clustering center under each message $m$.
\item $\bar{d}_m(o_j)$: $\bar{d}_m(o_j)=d_{\mu}^{\pi}(o_j)/d_{\mu}^{\pi}(m)$ is the marginalized probability of $o_j$ in cluster $m$;
\item $d_{\mu}^{\pi}(m)$: this is the visitation probability of message $m$, it quantifies how frequently a message $m$ is encountered as an agent interacts with the environment according to the policy $\pi$ and environmental condition which is the initial observation distribution is $\mu$.

\item $\epsilon(o_j)$: consider two vectors, $\bar{Q}^*(o_j)$ and $\bar{H}(m)$. The cosine-distance between these vectors is represented by $\epsilon(o_j) = D_{cos}(\bar{Q}^*(o_j), \bar{H}(m))$. Cosine-distance measures the angular separation between vectors; a smaller value indicates greater similarity, while a larger value implies more dissimilarity. 

\item $\epsilon$: $\epsilon \triangleq \sum_m d_{\mu}^{\pi}(m) \sum_{o_j\sim m} \bar{d}_m(o_j) \cdot \epsilon(o_j)$, The term "average cosine-distance" $\epsilon$ is calculated across all clusters represented by different message labels $m$. Clusters are groups of action-value vectors with small cosine-distance. 
The calculation involves two nested summations:
\begin{enumerate}
    \item For each message "m", the outer summation $\sum_m$ calculates the visitation probability of that message under a certain policy and initial observation distribution, denoted as $d_{\mu}^{\pi}(m)$.

\item For each pair of a message "m" and its corresponding observation $o_j$ (denoted as $o_j\sim m$) in the cluster with clustering label $m$, the inner summation $\sum_{o_j\sim m}$ involves the product of two components:
\begin{enumerate}
    \item $\bar{d}_m(o_j)$, representing a weight associated with the observation $o_j$ based on message "m", which is the marginalized probability of $o_j$ in cluster "m";
\item  $\epsilon(o_j)$, which is the cosine-distance between vectors associated with $o_j$ in cluster "m".
\end{enumerate}
\end{enumerate}

\item $Q_{\rm max}$: it is the maximum absolute action value of $\bar{Q}^*(o_j)$ in each cluster as $Q_{\rm max}=max_{o_j}||\bar{Q}^*(o_j)||_2$,

\end{itemize}

\subsection{Proof Outline for the final Theoretical Results in Theorem 4.3}
The proof consists of the following major steps:

(\romannumeral1). We characterize the change in optimal expected average return $J(\pi^*)-J(\pi,g)$ for any two policies $\pi^*$ (under full observability) and $\pi$ (under partial observability with communication) based on the joint action-values under $\pi^*$. The result is stated as a Policy Change Lemma~\ref{lemma:1}.

(\romannumeral2). We apply Policy Change Lemma~\ref{lemma:1} to two auxiliary policies $\pi^*_{(j)}$ and $\pi_{(j)}$, 
where $\pi^*_{(j)}$ is the optimal policy conditioned on complete observations $\boldsymbol{o} = [o_1,\dots,o_j,\dots,o_n]$ including observation $o_j$ of agent $j$, and $\pi_{(j)}$ is the policy conditioned on partial observations $\boldsymbol{o} = [o_1,\dots,o_{j_1}, m_{j}, o_{j+1}\dots,o_n]$ without knowing observation $o_j$ but including a communication message label $m_{j}=g(o_j)$ generated by agent $j$.
The result (formulated in Lemma~\ref{lemma:regret_0}) allows us to quantify the impact of message-encoding agent $j$'s local observations $o_j$ as $m_j$, rather than having complete access to $o_j$, on the optimal expected average return gap.

(\romannumeral3). 
Finally, we construct a sequence of policies beginning from $\pi^{*}$, end with $\pi$, which has a process of changing by replacing each observation $o_j$ by label $m_j$, one at a time:
$\pi^{*}=[\pi_i{(a_i|o_1,o_2,\dots,o_n)}, \forall i$] $\to$ $[\pi_i{(a_i|m_1,o_2,\dots,o_n)}, \forall i$] $\to$ $[\pi_i{(a_i|m_1,m_2,\dots,o_n)},\forall i]$ $\to$ $\dots$ $\to$ $\pi$=$[\pi_i({a_i|m_1,m_2,\dots,m_n}),\forall i]$. 
Applying the results from step (\romannumeral2) (Lemma~\ref{lemma:regret_0}) for $n$ times, we can quantify the desired return gap between $\pi^*$ with full observability and communication-enabled policy $\pi$ in Thm.~\ref{thm:thm_1}. 

\subsection{Proofs for Lemma 4.1}
\label{appendix:app_proof_lemma1}

\begin{proof}
We prove the result in this lemma by leveraging observation-based state value function $V^{\pi}(\boldsymbol{o})$ in Eq.(2) and the corresponding action value function $Q^{\pi}(\boldsymbol{o},\boldsymbol{a})=\mathbb{E}_{\pi}\left[\sum_{i=0}^{\infty}\gamma^i \cdot R_{t+i} \Big|\boldsymbol{o}_t=\boldsymbol{o}, \boldsymbol{a}_{t}= \boldsymbol{a}\right]$ to unroll the Dec-POMDP. Here we consider all other agents $i\neq j$ as a conceptual agent denoted by $-j$.

\begin{equation} \label{equ:regret_1_app}
    \setlength{\abovedisplayskip}{1pt}
    \setlength{\belowdisplayskip}{1pt}
    \begin{aligned}
    &J(\pi^*)-J(\pi,g)\\
    &=  E_{\mu}(1-\gamma)[V^{*}(\boldsymbol{o}(0))-V^{\pi}(\boldsymbol{o}(0))],\\
    &=E_{\mu}(1-\gamma)[V^{*}(o_{-j}(0),o_j(0))-V^{\pi}(o_{-j}(0),o_j(0))],\\
    &=E_{\mu}(1-\gamma)[V^{*}(o_{-j}(0),o_j(0))-Q^{*}(o_{-j}(0),o_j(0),\boldsymbol{a}_0^{\pi}) + Q^{*}(o_{-j}(0),o_j(0),\boldsymbol{a}_0^{\pi}) - Q^{\pi}(o_{-j}(0),o_j(0),\boldsymbol{a}_0^{\pi})],\\
    &=E_{\mu}(1-\gamma)[\Delta^{\pi^{}}(o_{-j}(0),o_j(0),\boldsymbol{a}_0^{\pi{}})+(Q^{*}-Q^{\pi})],\\
    &=E_{\mu}(1-\gamma)[\Delta^{\pi^{}}(o_{-j}(0),o_j(0),\boldsymbol{a}_0^{\pi{}}) + E_{o_{-j}(1),o_j(1)\sim P(\cdot|o_{-j}(0),o_j(0),\boldsymbol{a}_0^{\pi})}[\gamma(V^{*}(o_{-j}(1),o_j(1)) -V^{\pi}(o_{-j}(1),o_j(1)))]],\\
    &=E_{\mu,o_{-j}(t),o_j(t)\sim P}(1-\gamma)[\sum_{t=0}^{\infty}\gamma^t \Delta(o_{-j}(t),o_j(t),\boldsymbol{a}_t^{\pi})|\boldsymbol{a}_t^{\pi}],\\
    &=E_{o_{-j},o_j \sim d_{\mu}^{\pi}, \boldsymbol{a}^{\pi}=\pi(o_{-j},g(o_j))}[\Delta(o_{-j},o_j,\boldsymbol{a}^{\pi})|\boldsymbol{a}^{\pi}], \\
    &=\sum_{m}\sum_{o_j \sim m}\sum_{o_{-j}}\Delta(o_{-j}(t),o_j(t),\boldsymbol{a}_t^{\pi})\cdot d_{\mu}^{\pi}(o_{-j},o_j), \\
    & =  \sum_{m} \sum_{o_j \sim m} \sum_{o_{-j}}[Q^{*}(o_{-j},o_j,\boldsymbol{a}_t^{\pi^*}) - Q^*(o_{-j},o_j,\boldsymbol{a}_t^{\pi})]\cdot d_{\mu}^{\pi}(o_{-j},o_j),\\
    &\le \sum_{m} \sum_{o_j \sim m} \sum_{o_{-j}}[Q^{*}(o_{-j},o_j,\boldsymbol{a}_t^{\pi^*}) - Q(o_{-j},o_j,\boldsymbol{a}_t^{\pi})]\cdot d_{\mu}^{\pi}(o_{-j},o_j),\\
    &\le \sum_{m} \sum_{\boldsymbol{o}\sim m} [Q^{{*}}(\boldsymbol{o},\boldsymbol{a}_t^{\pi^*}) - Q^{\pi}(\boldsymbol{o},\boldsymbol{a}_t^{\pi})]\cdot d_{\mu}^{\pi}(\boldsymbol{o}),
    \end{aligned}
\end{equation}
Step 1 is to use Eq.(3) in the main paper with initial observation distribution $\boldsymbol{o}(0)\sim \mu$ at time $t=0$ to re-write the average expected return gap; Step 2 is separate the observations as agent j and the other agents as a single conceptual agent $-j$, then $\boldsymbol{o}=(o_{-j},o_j)$; Step 3 and step 4 are to obtain the sub-optimality-gap of policy $\pi$, defined as $\Delta^{\pi}(\boldsymbol{o},\boldsymbol{a})=V^{*}(\boldsymbol{o})-Q^{*}(\boldsymbol{o},\boldsymbol{a}^{\pi}) = Q^{*}(\boldsymbol{o},\boldsymbol{a}^{\pi^*}) - Q^{*}(\boldsymbol{o},\boldsymbol{a}^{\pi})$, by substracting and plus a action-value function $Q^{*}(o_{-j}(0),o_j(0),\boldsymbol{a}_0^{\pi})$; Step 5 is to unroll the Markov chain from time $t=0$ to time $t=1$; Step 6 is to use sub-optimality-gap accumulated for all time steps to represent the return gap; Step 7 and step 8 is to absorb the discount factor $1-\gamma$ by  multiplying the $d^{\pi}_{\mu}(\boldsymbol{o})=(1-\gamma)\sum_{t=0}^{\infty} \gamma^t \cdot P(\boldsymbol{o}_t=\boldsymbol{o}|\pi,\mu)$ which is the $\gamma$-discounted visitation probability of observations $\boldsymbol{o}$ under policy $\pi$ given initial observation distribution $\mu$; Step 9 is to revert $\Delta^{\pi}(\boldsymbol{o},\boldsymbol{a})$ back to $Q^{*}(\boldsymbol{o},\boldsymbol{a}^{\pi^*}) - Q^{*}(\boldsymbol{o},\boldsymbol{a}^{\pi})$; Step 10 is to replace the second term $Q^{*}(\boldsymbol{o},\boldsymbol{a}^{\pi})$ with $Q^{\pi}(\boldsymbol{o},\boldsymbol{a}^{\pi})$. Since $Q^*(\boldsymbol{o},\boldsymbol{a}^{\pi})$ is larger than $Q^{\pi}(\boldsymbol{o},\boldsymbol{a}^{\pi})$, therefore, the inequality is valid.
\end{proof}

\subsection{Proof Outline for Lemma 4.2}
The proof contains the following major steps: 
(\romannumeral1). Recasting the communication problem into an online clustering problem;

(\romannumeral2). Re-writing the return gap derived in Policy Change Lemma~\ref{lemma:1} in vector terms using action-value vectors $\bar{Q}^*(o_j)$;

(\romannumeral3). Projecting action-value vectors towards cluster centers to quantify the return gap through orthogonal parts related to projection errors.

(\romannumeral4). Deriving an upper bound on the return gap by bounding the orthogonal projection errors using the average cosine-distance within each cluster.

\subsection{Proofs for Lemma 4.2}
\label{appendix:app_proof_lemma2}


\begin{proof}
 Since the observability of all other agents $i\neq j$ remains the same, we consider them as a conceptual agent denoted by $-j$.
 For simplicity, we use $\pi^*$ to represent ${\pi}^*_{(j)}$, and $\pi$ to represent ${\pi}*_{(j)}$ in distribution functions.
Similar to the illustrative example, we define the action value vector corresponding observation $o_j$, i.e.,
\begin{equation} \label{equ:vector_re_write_Q_1_app}
    \begin{aligned}
    \bar{Q}^*(o_j)=[\widetilde{Q}^*(o_{-j},o_j), \forall{o_{-j}}],
    \end{aligned}
\end{equation}
where $o_{-j}$ are the observations of all other agents and $\widetilde{Q}^*(o_{-j},o_j)$ is a vector of action values weighted by marginalized visitation probabilities $d_{\mu}^{\pi}(o_{-j}|o_j)$ and corresponding to different actions:
\begin{equation} \label{equ:vector_re_write_Q_1_0_app}
   \begin{aligned}
    \widetilde{Q}^*(o_{-j},o_j)=[Q^*(o_{-j},o_j,\boldsymbol{a})\cdot d_{\mu}^{\pi}(o_{-j}|o_j), \forall \boldsymbol{a}], \forall{o_{-j}} \sim d_{\mu}^{\pi}(o_{-j}|o_j),\forall{\boldsymbol{a}} \sim \mu^{\pi}(o_{-j},o_j,\boldsymbol{a}).
    \end{aligned}
\end{equation}
Here $\mu^{\pi}(\boldsymbol{o},\boldsymbol{a})$ denotes the observation-action distribution given a policy $\pi$, which means $\mu^{\pi}(\boldsymbol{o},\boldsymbol{a}) = d_{\mu}^{\pi}(\boldsymbol{o})\pi(\boldsymbol{a}|\boldsymbol{o})$. 
We also denote $d_{\mu}^{\pi}(m)$ is the visitation probability of message $m$, and $\bar{d}_m(o_j)$ is the marginalized probability of $o_j$ in cluster $m$, i.e.,
\begin{equation} \label{eq:d_m_app}
    d_{\mu}^{\pi}(m)=\sum_{o_j \sim m^{}}d_{\mu}^{\pi}(o_j),\ \ 
    \bar{d}_m(o_j)=\frac{d_{\mu}^{\pi}(o_j)}{d_{\mu}^{\pi}(m)},
\end{equation}

Since the optimal return $J(\pi^*_{(j)})$ is calculated from the action values by $\max_{\boldsymbol{a}} Q(\boldsymbol{o}, \boldsymbol{a})$, we rewrite this in the vector form by defining a maximization function $\Phi_{\rm max}(\widetilde{Q}^*(o_{-j},o_j))$ that returns the largest component of vector $\widetilde{Q}^*(o_{-j},o_j)$. With slight abuse of notations, we also define $\Phi_{\rm max}(\bar{Q}^*(o_j))=\sum_{o_{-j}} \Phi_{\rm max}(\widetilde{Q}^*(o_{-j},o_j))$ as the expected average return conditioned on $o_j$. Then $\sum_{o_j \sim m}\bar{d}_m(o_j) \cdot \Phi_{\rm max}(\bar{Q}^*(o_j))$ could be defined as selecting the action from optimal policy $\pi^*_{(j)}$ where agent chooses different action distribution to maximize $(\bar{Q}^*(o_j))$. 
We could re-write this term with the action-value function as:
\begin{equation} \label{equ:Phi_2_app}
    \setlength{\abovedisplayskip}{1pt}
    \setlength{\belowdisplayskip}{1pt}
    \begin{aligned}
    &\sum_{o_j \sim m}\bar{d}_m(o_j) \cdot \Phi_{max}(\bar{Q}^*(o_j))\\
    &=\sum_{o_j \sim m}\bar{d}_m(o_j) \cdot \sum_{o_{-j}} max_{\boldsymbol{a}}[Q^*(o_{-j},o_j,\boldsymbol{a})\cdot d_{\mu}^{\pi}(o_{-j}|o_j)],\\
    &=\sum_{o_j \sim m}(\frac{d_{\mu}^{\pi}(o_j)}{d_{\mu}^{\pi}(m)}) \cdot \sum_{o_{-j}} max_{\boldsymbol{a}}[Q^*(o_{-j},o_j,\boldsymbol{a})\cdot d_{\mu}^{\pi}(o_{-j}|o_j)],\\
    &=(\frac{1}{d_{\mu}^{\pi}(m)})  \sum_{o_j \sim m} d_{\mu}^{\pi}(o_j)  \sum_{o_{-j}} max_{\boldsymbol{a}}[Q^*(o_{-j},o_j,\boldsymbol{a}) d_{\mu}^{\pi}(o_{-j}|o_j)],
    \end{aligned}
\end{equation}
then we used the fact that while $J(\pi^*_{(j)})$ conditioning on complete $o_j$ can achieve maximum for each vector $\bar{Q}^*(o_j)$, policy $\pi_{(j)}$ is conditioned on messages $m_{ij}$ rather than complete $o_j$ and thus must take the same actions for all $o_j$ in the same cluster. Hence we can construct a (potentially sub-optimal) policy to achieve $\Phi_{\rm max}(\sum_{o_j\sim m}\bar{d}_m(o_j)\cdot\bar{Q}^*(o_j))$ which provides a lower bound on $J(\pi_{(j)}, g)$. Plugging in Eq.(\ref{eq:d_m_app}), $\Phi_{\rm max}(\sum_{o_j \sim m}\bar{d}_m(o_j)\cdot\bar{Q}^*(o_j))$ can be re-written as:

\begin{equation} \label{equ:Phi_1_app}
    \setlength{\abovedisplayskip}{1pt}
    \setlength{\belowdisplayskip}{1pt}
    \begin{aligned}
    &\Phi_{\rm max}(\sum_{o_j \sim m}\bar{d}_m(o_j)\cdot\bar{Q}^*(o_j))\\
    &=\sum_{o_{-j}} max_{\boldsymbol{a}}[\sum_{o_j \sim m}\bar{d}_m(o_j)Q^*(o_{-j},o_j,\boldsymbol{a})\cdot d_{\mu}^{\pi}(o_{-j}|o_j)],\\
    &=\sum_{o_{-j}} max_{\boldsymbol{a}}[\sum_{o_j \sim m}(\frac{d_{\mu}^{\pi}(o_j)}{d_{\mu}^{\pi}(m)})Q^*(o_{-j},o_j,\boldsymbol{a})\cdot d_{\mu}^{\pi}(o_{-j}|o_j)],
    \end{aligned}
\end{equation}

Multiplying the two equations above Eq.(\ref{equ:Phi_1_app}) and Eq.(\ref{equ:Phi_2_app}) with $d_{\mu}^{\pi}(m)$ which is the visitation probability of message $m$ to replace the term $\sum_{o_j \sim m} \sum_{o_{-j}}[Q^*(o_{-j},o_j,\boldsymbol{a}^*) - Q(o_{-j},o_j,\boldsymbol{a})]\cdot d_{\mu}^{\pi}(o_{-j},o_j)$ in the Eq.(\ref{equ:regret_1_app}), we obtain an upper bound on the return gap:
\begin{equation} \label{equ:regret_2_app}
    \setlength{\abovedisplayskip}{1pt}
    \setlength{\belowdisplayskip}{1pt}
    \begin{aligned}
    &J({\pi}^*_{(j)}) - J({\pi_{(j)}}, g) \\
    &\le \sum_{m} \sum_{o_j \sim m} \sum_{o_{-j}}[Q^*(o_{-j},o_j,\boldsymbol{a}^*) - Q(o_{-j},o_j,\boldsymbol{a})]\cdot d_{\mu}^{\pi}(o_{-j},o_j),\\
    &=\sum_{m}d_{\mu}^{\pi}(m)[ \sum_{o_j \sim m}\bar{d}_m(o_j) \cdot \Phi_{\rm max}(\bar{Q}^*(o_j)) \\
    &- \Phi_{\rm max}(\sum_{o_j \sim m}\bar{d}_m(o_j)\cdot\bar{Q}^*(o_j)) ].
    \end{aligned}
\end{equation}



To quantify the resulting return gap, we denote the center of a cluster of vectors $\bar{Q}^*(o_j)$ for $o_j\sim m$ under message $m$ as:
\begin{equation} \label{eq:center_H_app}
    \bar{H}(m)=\sum_{o_j \sim m^{}}\bar{d}_m(o_j)\cdot\bar{Q}^*(o_j).
\end{equation}

We contracts clustering labels to make corresponding joint action-values $\{\bar{Q}^*(o_j): g(o_j)=m\}$ close to its center $\bar{H}(m)$. Specifically, the average cosine-distance is bounded by a small $\epsilon$ for each message $m$, i.e.,
\begin{equation} \label{equ:epsilon_app}
    \setlength{\abovedisplayskip}{1pt}
    \setlength{\belowdisplayskip}{1pt}
    \begin{aligned}
    &\sum_{o_j \sim m} D(\bar{Q}^*(o_j),\bar{H}(m))\cdot d_{\mu}^{\pi}(m) \\
    & \le \sum_{o_j \sim m} \epsilon(o_j) \cdot d_{\mu}^{\pi}(m) \le \epsilon, \ \ \ \forall{m} \in M.
    \end{aligned}
\end{equation}
where $\epsilon(o_j)= \sum_{o_j \sim m} D_{cos}( \bar{Q}^*(o_j), \bar{H}(m))/K$, $K$ is the number observations with the same label $m$, and $D_{cos}(A,B)=1-\frac{A\cdot B}{||A||||B||}$ is defined as the 1 abstract the cosine similarity $cos(\theta)=\frac{A\cdot B}{||A||||B||}$ between two vectors $A$ and $B$. 

For each pair of two vectors $\bar{Q}^*(o_j)$ and $\bar{H}(m)$ with $D(\bar{Q}^*(o_j),\bar{H}(m)) \le \epsilon(o_j)$, we use $\cos{\theta_{o_j}}$ to denote the cosine-similarity between each $\bar{Q}^*(o_j)$ and its center $\bar{H}(m)$. Then we have the cosine distance $D(\bar{Q}^*(o_j),\bar{H}(m)) =1- \cos{\theta_{o_j}} \le \epsilon(o_j)$.
By projecting $\bar{Q}^*(o_j)$ toward $\bar{H}(m)$, $\bar{Q}^*(o_j)$ could be re-written as $\bar{Q}^*(o_j) = Q^{\perp}(o_j) + \cos{\theta_{o_j}} \cdot \bar{H}_m$,
where $Q^{\perp}(o_j)$ is the auxiliary vector orthogonal to vector $\bar{H}_m$.

Then we plug the decomposed Q vectors $\bar{Q}^*(o_j)$ into the return gap we got in Eq.(\ref{equ:regret_2_app}), the first part of the last step of Eq.(\ref{equ:regret_2_app}) is bounded by:
\begin{equation} \label{equ:d_phi_1_app}
    \setlength{\abovedisplayskip}{1pt}
    \setlength{\belowdisplayskip}{1pt}
    \begin{aligned}
    &\sum_{o_j \sim m}\bar{d}_m(o_j) \cdot \Phi_{max}(\bar{Q}^*(o_j))\\
    &=\sum_{o_j \sim m}\bar{d}_m(o_j) \cdot  \Phi_{max}\left(Q^{\perp}(o_j) + \bar{H}(m) \cdot \cos{\theta_{o_j}}\right),\\
    & \le \sum_{o_j \sim m}\bar{d}_m(o_j) \cdot [\Phi_{max}(Q^{\perp}(o_j)) + \Phi_{max}(\bar{H}(m) \cdot \cos{\theta_{o_j}})],\\
    & \le \sum_{o_j \sim m}\bar{d}_m(o_j) \cdot [\Phi_{max}(Q^{\perp}(o_j)) + \Phi_{max}(\bar{H}(m)) ].
    \end{aligned}
\end{equation}

We use $||\alpha||_2$ to denote the L-2 norm of a vector $\alpha$. Since the maximum function $\Phi_{\rm max}(Q^{\perp}(o_j))$ can be bounded by the $L_2$ norm $C \cdot ||Q^{\perp}(o_j)||_2$ for some constant $C$. We define a constant $Q_{\rm max}$ as the maximine absolute value of $\bar{Q}^*(o_j)$ in each cluster as $Q_{\rm max}=max_{o_j}||\bar{Q}^*(o_j)||_2$.  Since $Q^{\perp}(o_j)=\bar{Q}^*(o_j)\cdot sin(\theta)$, and $|sin(\theta)| = \sqrt{1-cos^2(\theta)}=\sqrt{1-[1-\epsilon(o_j)]^2}$, the maximum value of $Q^{\perp}(o_j)$ could also be bounded by $Q_{\rm max}$, i.e.:
\begin{equation}\label{eq:q_max_app}
\setlength{\abovedisplayskip}{1pt}
\setlength{\belowdisplayskip}{1pt}
    \begin{aligned}
    &\Phi_{max}(Q^{\perp}(o_j)) \le C\cdot||Q^{\perp}(o_j)||_2,\\
    & \le C\cdot ||\bar{Q}^*(o_j)||_2 \cdot |\sin{\theta}|,\\
    &= C\cdot Q_{\rm max} \cdot \sqrt{1-[1-\epsilon(o_j)]^2},\\
    & \le O(\sqrt{\epsilon(o_j)}Q_{\rm max}) .
    \end{aligned}
\end{equation}

Plugging Eq.(\ref{eq:q_max_app})into Eq.(\ref{equ:d_phi_1_app}), we have:
\begin{equation} \label{equ:d_phi_2_app}
    \setlength{\abovedisplayskip}{1pt}
    \setlength{\belowdisplayskip}{1pt}
    \begin{aligned}
    &\sum_{o_j \sim m}\bar{d}_m(o_j) \cdot \Phi_{max}(\bar{Q}^*(o_j))\\
    & \le \sum_{o_j \sim m}\bar{d}_m(o_j) \cdot [\Phi_{max}(Q^{\perp}(o_j)) + \Phi_{max}(\bar{H}(m)) ],\\
    & \le \sum_{o_j \sim m}\bar{d}_m(o_j) \cdot [O(\sqrt{\epsilon(o_j)}Q_{\rm max})+ \Phi_{max}(\bar{H}(m)) ],\\
    &= \sum_{o_j \sim m}\bar{d}_m(o_j) \cdot [O(\sqrt{\epsilon(o_j)}Q_{\rm max})+ \Phi_{max}(\bar{H}(m)) ],
    \end{aligned}
\end{equation}

It is easy to see from the last step in Eq.(\ref{equ:regret_2_app}) that the return gap $J({\pi}^*_{(j)}) - J({\pi_{(j)}}, g)$ is bounded by the first part $\sum_{m}d_{\mu}^{\pi}(m)[ \sum_{o_j \sim m}\bar{d}_m(o_j) \cdot \Phi_{\rm max}(\bar{Q}^*(o_j))$ which has an upper bound derived in Eq.(\ref{equ:d_phi_2_app}), then the return gap could also be bounded by the upper bound in  Eq.(\ref{equ:d_phi_2_app}), i.e.,
\begin{equation} \label{equ:regret_3_app}
    \setlength{\abovedisplayskip}{1pt}
    \setlength{\belowdisplayskip}{1pt}
    \begin{aligned}
    &J({\pi}^*_{(j)}) - J({\pi_{(j)}}, g) \\
    &=\sum_{m}d_{\mu}^{\pi}(m)[ \sum_{o_j \sim m}\bar{d}_m(o_j) \cdot \Phi_{max}(\bar{Q}^*(o_j)) - \Phi_{max}(\sum_{o_j \sim m}\bar{d}_m(o_j)\cdot\bar{Q}^*(o_j)) ],\\
    & = \sum_{m}d_{\mu}^{\pi}(m)\left[\sum_{o_j \sim m}\bar{d}_m(o_j) \cdot [O(\sqrt{\epsilon(o_j)}Q_{\rm max})+ \Phi_{max}(\bar{H}(m))]\right] - \sum_{m}d_{\mu}^{\pi}(m)\left[\Phi_{max}(\sum_{o_j \sim m}\bar{d}_m(o_j)\cdot\bar{Q}^*(o_j)) \right],\\
&\le \sum_{m} d_{\mu}^{\pi}(m) \cdot [\sum_{o_j \sim m}\bar{d}_m(o_j) \cdot O(\sqrt{\epsilon(o_j)}Q_{\rm max})] ,\\
&\le \sum_{m} d_{\mu}^{\pi}(m)  \cdot O\left(\sqrt{\sum_{o_j \sim m} \bar{d}_m(o_j)\cdot \epsilon(o_j)}Q_{\rm max}\right)  , \\
    &\le \sum_{m}   d_{\mu}^{\pi}(m) \cdot O(\sqrt{\epsilon}Q_{\rm max}) ,\\  
    &=   O(\sqrt{\epsilon}Q_{\rm max}).
    \end{aligned}
\end{equation}

we can derive the desired upper bound $J(\pi^*_{(j)})-J(\pi_{(j)}, g) \le  O(\sqrt{\epsilon}Q_{\rm max})$ for the two policies in Lemma 4.2 in the main paper.
\end{proof}

\subsection{Proof of Theorem 4.3:}
We construct a sequence of policies beginning from $\pi^{*}$, end with $\pi$, which has a process of changing by replacing each observation $o_j$ by label $m_j$, one at a time: $\pi^{*}=[\pi_i{(a_i|o_1,o_2,\dots,o_n)}, \forall i$] $\to$ $[\pi_i{(a_i|m_1,o_2,\dots,o_n)}, \forall i$] $\to$ $[\pi_i{(a_i|m_1,m_2,\dots,o_n)},\forall i]$ $\to$ $\dots$ $\to$ $\pi$=$[\pi_i({a_i|m_1,m_2,\dots,m_n}),\forall i]$. Applying the results from Lemma 4.2 for $n$ times, we can quantify the desired upper bound between $J({\pi}^*)$ and $J({\pi},g)$ in this theorem.

\subsection{Detailed process of calculation of the average cosine distance defined in Eq.(6) in the main paper}
\begin{itemize}
    \item Within each cluster, we compute the cosine distance between every action-value vector and the  clustering center, thereby obtaining the value $\epsilon(o_j)$.
    
    \item To obtain the expectation of the cosine distance for each particular cluster denoted by $m$, we compute the product of each $\epsilon(o_j)$ with $\bar{d}_m(o_j)$, where $\bar{d}_m(o_j)$ represents the marginalized probability of $o_j$ in cluster $m$. 
    \item Having computed the expected cosine distance for each cluster $m$, we proceed to obtain the average cosine distance across all clusters. This is accomplished by multiplying $d_{\mu}^{\pi}(m)$ to each $\sum_{o_j\sim m} \bar{d}_m(o_j) \cdot \epsilon(o_j)$, where $d_{\mu}^{\pi}(m)$ denotes the visitation probability of message $m$.
\end{itemize}

\subsection{Detailed Explanations for the Illustrative example}
We want to use the example to illustrate the key intuitions behind our theoretical derivations, i.e., why the return gap in MARL with communications can be upper bounded by a term involving the average cosine distance $\epsilon$. 

The left figure in Fig.1 shows a centralized action-value(Q) table of a simple two-agent matrix game where the transition probability of the observations does not depend on policies and all observations are equally likely for each agent. The observation spaces are $\Omega_1 = \{o_{11},o_{12}\}$ and $\Omega_2 = \{o_{21},o_{22},o_{23},o_{24}\}$. Both agents have discrete action space, where $\mathcal{A}_1 = \{a_{11},a_{12}\}$ and agent $2$ has a deterministic policy to choose its actions based on its observation. $\bar{Q}^*(o_{2i})$ is the centralized action-value function vectors for each fixed $o_{2i}$, for example, the first column of Q vectors is $\bar{Q}^*(o_{21})=[53.2, 42.9, 31.8, 22.9]$. The goal of the two agents is to maximize the average return which could be achieved by trying to take actions to maximize the centralized Q values.

In a partially observable environment where agents can only see their own observations, if agent $1$ observes $o_{11}$, the average Q values of taking action $a_{11}$ is $(53.2+4.5+58.5+0.3)/4=29.125$, which is smaller than $(42.9+1.2+64.0+16.1)/4=31.05$ by taking action $a_{12}$, so agent $1$ will take action $a_{12}$ and get the average return $31.05$. 
Same calculations for observing $o_{12}$, then the agent $1$ will get the total average return for two possible observations as $(31.05 + 35.6)/2 = 33.325$ with partial observation. 
In this situation, the agent $1$ tries to choose the actions maximizing the Q values based on the average reward information, but since agent $1$ does not know the observation of agent $2$, it could not always choose the best action and result in some loss of average returns.

In a fully observable environment where agents can see others' observations, agent $1$ could choose actions to maximize the Q values (shown as the bold numbers) which could be located by observations of both agents. The agent $1$ will get the largest average reward as $(53.2+4.5+64.0+16.1)/4=34.425$ when observing $o_{11}$, and get $(31.8+28.0+34.1+81.5)/4=43.85$ when observing $o_{12}$, then the total average returns is $(34.425 + 43.85)/2=39.1375$.

The idea of our communication algorithm is to cluster the four observations of agent $2$ which are invisible to agent $1$ into $2^n$ clusters and send the cluster labels as a $n$-bit communication message to agent $1$. Here we start with $n=1$, i.e., only use $1-$bit message in this example. The most important message agent $2$ wants to send is where is the largest Q value located based on its own observation. Therefore, we use cosine distance $1-\cos(\theta)$ as the clustering metric which is defined as the 1 abstract the cosine of angle $\theta$ between two vectors and two vectors with similar directions(smaller angle $\theta$) have a smaller cosine distance. In this way, this clustering method could put the Q vectors with similar structures into the same cluster. We define the $\bar{Q}^*(o_{2i})$ as the centralized action-value function vectors for each fixed $o_{2i}$, for example, the Q vectors of fixed observation $o_{21}$ of agent $2$ is $\bar{Q}^*(o_{21})=[53.2, 42.9, 31.8, 22.9]$. 

Then we calculate the cosine distance between all possible pairs of Q vectors of agent $2$ -- $\bar{Q}^*(o_{2i})$ and $\bar{Q}^*(o_{2j})$, $\forall i,j \in \{1,2,3,4\}$ and list the cosine distance between each pair in the right figure in Fig.1. Based on the cosine distance values, we put vectors with smaller cosine distance between them into the same cluster. For example, as for the observations of agent $2$ which are unknown to agent $1$, we put $o_{21},o_{23}$ into the same cluster and $o_{22},o_{24}$ into another cluster and use clustering labels $0,1$ respectively as communication messages being sent to agent $1$. 
When agent $1$ observes $o_{11}$ and receive communication messages $0$ from agent $2$, it will know that agent $2$ observes $o_{21}$ or $o_{23}$, it will take action $a_{11}$ not taking the $a_{12}$ as it acted with only local observation, this is because $53.2+58.5=117.5$ is larger than $42.9+64.0=106.9$. Same as observing $o_{11}$ and getting label $1$, agent $1$ will take action $a_{12}$ since $1.2+16.1=17.3$ is larger than $4.5+0.3=4.8$, then the average return when agent $1$ observing $o_{11}$ becomes $(117.5+17.3)/4=33.7$. 
We do the same calculations when agent $1$ observes $o_{12}$ and get the average return as $43.85$, then the total average returns for all conditions of agent $1$ is $(33.7+43.85)/2=38.775$. 

Therefore, the regret between policies with our method and the full observation is $39.1375-38.775=0.3625$, and the regret between policies with partial observation and full observation is $39.1375-33.325=5.8125$. 
We could see that using clustering labels as communication messages leads to a better decision with a small regret. And this clustering method not only works with discrete observation space but also could be compatible with any online clustering algorithm in order to use it with a continuous state space environment. In addition, the centralized Q table is available using centralized training decentralized execution algorithms mentioned in Section 2.

\section{Return-Gap-Minimization Communication (RGMComm) Algorithm Design}

\subsection{Detailed Explanation of the communication loss function $\mathcal{L}(g_{\xi_i})$}
The message generation functions $g=\{g_1,\dots,g_n\}$ of all agents are approximated using DNNs parameterized by $\xi=\{\xi_1,\dots,\xi_n\}$. For each agent $j$, we first sample a random minibatch of $K_1$ samples $\mathcal{X}_{j}=(\boldsymbol{o}^{k_1},\boldsymbol{a}^{k_1},R^{k_1},\boldsymbol{o}'^k)$ from the transitions recorded in replay buffer $\mathcal{R}$ which contains the observation-action pairs from all agents including agent $j$. Then we sample a set $\mathcal{X}_{-j}=(\boldsymbol{o}_{-j}^{k_2},\boldsymbol{a}_{-j}^{k_2})$ from $\mathcal{X}_{j}$ which are top $K_2$ frequent observation-action pairs in sample $\mathcal{X}_{j}$ after removing $o_j$ and $a_j$ from $\mathcal{X}_{j}$. 
Then we form the sampled trajectories by combining $(o_j,a_j)$ in $\mathcal{X}_{j}$ and $(\boldsymbol{o}_{-j},\boldsymbol{a}_{-j})$ in $\mathcal{X}_{-j}$ as $\mathcal{D}=(\boldsymbol{o}^{k_1k_2},\boldsymbol{a}^{k_1k_2},R^{k_1k_2},\boldsymbol{o}'^{k_1k_2})$. 
To obtain the action-values for clustering, we query the critic networks with $\mathcal{D}$ as the input to get the $\hat{Q}_{\omega}(o_j,\boldsymbol{o}_{-j},a_j, \boldsymbol{a}_{-j})$, which approximates action-value vectors $\bar{Q}^*(o_j)$ defined in Eq (\ref{equ:vector_re_write_Q_1}) and illustrative example in Sec.~\ref{sec:theory}. We use $\bar{Q}^*(o_j)$ instead of $\hat{Q}_{\omega}$ in the following part to make it consistent with the theoretical results. 
The message $m_j=g_{\xi_j}(o_j)$ is updated by minimizing a Regularized Information Maximization(RIM) loss function~\cite{imsat} $\mathcal{L}(g_{\xi_j})$ in terms of $\bar{Q}^*(o_j)$:
\begin{equation} \label{eq: commloss_app}
    \setlength{\abovedisplayskip}{1pt}
    \setlength{\belowdisplayskip}{1pt}
    \begin{aligned}
&\mathcal{L}(g_{\xi_i}) = L_{CD} - \lambda L_{MI}, \\
&L_{CD}= \sum_{p=1}^{K_1}\sum_{q \in N_{K_3}(p)} \left[D_{cos}(\bar{Q}^*(o_j^p),\bar{Q}^*(o_j^q)\right]\|m_{j}^p-m_{j}^q\|^2, \\
&L_{MI} = I(o_{j};m_{j})=H(m_j)-H(m_j|o_j),
    \end{aligned}
\end{equation}
where $L_{CD}$ is a clustering loss in the form of  Locality-preserving loss~\cite{localitypreservingloss}, it preserves the locality of the clusters by pushing nearby data points of action-value vectors together. Inside $L_{CD}$, $o_j^p \in \mathcal{X}_j, p=1,\dots,K_1$ is the sampled observation $o_j$, $N_{K_3}(p)$ is the set of $K_3$ nearest neighbors of $\bar{Q}^*(o_i^p)$, we use $D_{cos}$ as the metric to define the neighbors, where $D_{cos}$ is the cosine-distance between the points $\bar{Q}^*(o_j^p)$ and its neighbor $\bar{Q}^*(o_j^q)$. And the mutual information loss, $L_{MI}$, is the summation of mutual information between observation $o_{j}$ and message $m_{j}$. It measures how much information is shared between the observation and message and is zero if $o_{j}$ and $m_{j}$ are independent. Here we measure the mutual information loss as the difference between marginal entropy and conditional entropy, $H(m_j)-H(m_j|o_j)$, which are calculated as:
\begin{equation} \label{eq: L_MI_app}
    \setlength{\abovedisplayskip}{1pt}
    \setlength{\belowdisplayskip}{1pt}
    \begin{aligned}
&H(m_j)=h(p_{\xi_j}(m_j))=h(1/K_1)\sum_{p=1}^{K_1}p_{\xi_j}(m_j|o_j),\\
&H(m_j|o_j)=1/K_1)\sum_{p=1}^{K_1}h(p_{\xi_j}(m_j|o_j).
    \end{aligned}
\end{equation}
where $h(p(x))=-\sum_{x'} p(x')\log p(x')$ is the entropy function. \textbf{Increasing} the marginal entropy $H(m_j)$ that measures the randomness of the $m_j$ promotes more diversity in the clusters and leads to roughly equal-sized clusters (uniform cluster sizes). And the conditional entropy $H(m_j|o_j)$ measures the unpredictability or randomness of $m_j$ given $o_j$, by \textbf{decreasing} this entropy, the goal is to make the assignment of observations to clusters more certain and less ambiguous, In other words, each observation should clearly belong to one cluster without much overlap or confusion with other clusters. The two objectives ensure that while clusters are of roughly the same size, the assignments of observations to these clusters are clear and definitive. 

In summary, minimizing $L_{CD}$ ensures we are minimizing the cosine-distance to bound the return gap between the full-observability policy and partial-observability policy with communication, and maximizing $L_{MI}$ ensures the clusters have uniform sizes and cluster assignments for observations are unambiguous. A hyper-parameter $\lambda \in \mathbf{R}$ trades off the two terms.
Before calculating the cosine-distance, action-value vectors are normalized and went through the activation function $f((\bar{Q}^*(o_i^p)-\alpha)/\beta)$ for a better clustering result. The choice of activation function is explored as an ablation study in the experiments section.

\subsection{RGMComm message generation training process Pseudo-Code}

The message generation training process of our proposed Return-Gap-Minimization Communication (RGMComm) algorithm is summarized here in Algorithm~\ref{alg:comm_main}.
\begin{algorithm}[htbp]
\caption{Training message generation}\label{alg:comm_main}
{\begin{algorithmic}[1]
\State {\bfseries Input:}$K_1$, $K_2$, $K_3$, $\lambda$, Replay buffer $\mathcal{R}$, current parameters $\omega$, $\xi=\{\xi_1,\dots,\xi_n\}$.
   \For{$t=1$ {\bfseries to} $T$}
   \For{agent $j$ to n}
   \State Get top-$K_1$ samples $\mathcal{X}_{j}=(\boldsymbol{o}^{k_1},\boldsymbol{a}^{k_1},R^{k_1},\boldsymbol{o}'^{k_1})$ from reply buffer $\mathcal{R}$;
   \State Get top-$K_2$ samples $\mathcal{X}_{-j}=(\boldsymbol{o}_{-j}^{k_2},\boldsymbol{a}_{-j}^{k_2})$ from $\mathcal{X}_{j}$ from $\mathcal{X}_{j}$;
   \State combining $(o_j,a_j)$ in $\mathcal{X}_{j}$ and $(\boldsymbol{o}_{-j},\boldsymbol{a}_{-j})$ in $\mathcal{X}_{-j}$ as $\mathcal{D}=(\boldsymbol{o}^{k_1k_2},\boldsymbol{a}^{k_1k_2},R^{k_1k_2},\boldsymbol{o}'^{k_1k_2})$;
   \State Query the critic $\omega$ with $\mathcal{D}$ as the input to get the $\hat{Q}_{\omega}(o_j,\boldsymbol{o}_{-j},a_j, \boldsymbol{a}_{-j})$;
   \State Update $g_{\xi_{j}}$ by minimizing the loss $L(g_{\xi_{j}})$ defined in the main paper;
   \EndFor

   \EndFor
\State {\bfseries Output:} Message functions:  $m_j = g_{\xi_j}(o_j)$.
\end{algorithmic}}
\end{algorithm}

\subsection{RGMComm - Complete Training Pseudo-code}
The complete training procedure of RGMComm algorithm is summarized in Algorithm~\ref{algo:RGMComm_training} below.
\begin{algorithm}[htbp]
	\caption{\rm RGMComm training Algorithm} \label{algo:RGMComm_training}
 \small 
	{\begin{algorithmic}[1]
	    \State \textbf{Input}: sampling size $K_1$, $K_2$, nearest neighbors' size $K_3$, reply buffer size $R$, exploration constant $\tau_1$, $\tau_2$, $\tau_3$, learning rate $\alpha_0,\beta_0,\eta_0$;
		\State Initialize network weights $(\theta ,\omega, \xi )$ for each agent $j$ at random;
		\State Initialize target weights $(\theta ', \omega ',\xi') \gets (\theta , \omega,\xi )$ for each agent $j$ at random;
		\For {episode$=1,2,\ldots,\textbf{max-episode}$}
		\State Start a new episode;
		\State Initialize a random process $\mathcal{N}$ for action exploration;
            \State Receive initial observation $\boldsymbol{o}=[o_1,\dots,o_n]$;
		\For {$t=1$ to \textbf{max-episode-length} } 
            \State for each agent $j$, receive initial observation $o_j$, receive initial labels $m_{-j}=\{g_{\xi_i}(o_i),\forall i \neq j\}$; 
            \State Generating communication labels $m_j=g_{\xi_j}(o_j)$;
            \State Select action $a_j =\pi_{\theta_j }(o_j,m_{-j})+\mathcal{N}_t$ w.r.t. the current policy and exploration;
            \State Execute actions $\boldsymbol{a}=[a_1,\dots,a_n] $, observe reward $R$, new observation $\boldsymbol{o}'$;
            \State Store $(\boldsymbol{o} ,\boldsymbol{a} ,R ,\boldsymbol{o}' )$ in replay buffer $\mathcal{R}$;
		\For{agent $j=1$ to $n$}
		\State Sample top-$K_1$ samples $\mathcal{X}_{j}=(\boldsymbol{o}^{k_1},\boldsymbol{a}^{k_1},R^{k_1},\boldsymbol{o}'^{k_1})$ from reply buffer $\mathcal{R}$;
            \State Sample top-$K_2$ samples $\mathcal{X}_{-j}=(\boldsymbol{o}_{-j}^{k_2},\boldsymbol{a}_{-j}^{k_2})$ from $\mathcal{X}_{j}$ from $\mathcal{X}_{j}$;
            \State Combining $(o_j,a_j)$ in $\mathcal{X}_{j}$ and $(\boldsymbol{o}_{-j},\boldsymbol{a}_{-j})$ in $\mathcal{X}_{-j}$ as $\mathcal{D}=(\boldsymbol{o}^{k_1k_2},\boldsymbol{a}^{k_1k_2},R^{k_1k_2},\boldsymbol{o}'^{k_1k_2})$;
            \State Query Centralized Critic parameterized by $\omega$, and get $K_1$ vectors $\hat{Q}_{\omega}(o_j^p,\boldsymbol{o}_{-j}^p,a_j^p,\boldsymbol{a}_{-j}^p)$, each has a length of $K_2$;
            \For{$p=1$ {\bfseries to} $K_1$}
            \State Normalize $\hat{Q}_{\omega}(o_j^p,\boldsymbol{o}_{-j}^p,a_j^p,\boldsymbol{a}_{-j}^p))$ and go through activation function $f$;
            \EndFor
            \State Getting the set of $K_3$ nearest neighbors of each $\hat{Q}_{\omega}(o_j^p,\boldsymbol{o}_{-j}^p,a_j^p,\boldsymbol{a}_{-j}^p), \forall p$;
            \State Saving $K_3$ nearest neighbors in set $N_{K_3}(p)=\{\hat{Q}_{\omega}(o_j^q,\boldsymbol{o}_{-j}^q,a_j^q,\boldsymbol{a}_{-j}^q)), q=1,\dots,K_3\}$
            \State Update $g_{\xi_j}$ by minimizing the loss $L(g_{\xi_j})=L_{CD} - \lambda L_{MI}$:
            \begin{equation*}
                \mathcal{L}(g_{\xi_j}) = \sum_{p=1}^{K_1}\sum_{q \in N_{K_3}(p)} \left[D_{cos}(\hat{Q}_{\omega}(o_j^p,\boldsymbol{o}_{-j}^p,a_j^p,\boldsymbol{a}_{-j}^p),\hat{Q}_{\omega}(o_j^q,\boldsymbol{o}_{-j}^q,a_j^q,\boldsymbol{a}_{-j}^q)\right]\|g_{\xi_j}(o_j^p)-g_{\xi_j}(o_j^q)\|^2 -\lambda \sum_{p=1}^{K_1} I(o_{j}^p;g_{\xi_j}(o_j^p));
            \end{equation*}
            \State Set $y^p=R^p + \gamma \hat{Q}_{\omega'}(\boldsymbol{o}'^p,\boldsymbol{a})|_{\boldsymbol{a}'=\pi'(\boldsymbol{o}'^p)}$;
            \State Update critic by minimizing the loss $\mathcal{L}(\omega) = \frac{1}{K_1}\sum_{p}[(\hat{Q}_{\omega}(\boldsymbol{o}^p,\boldsymbol{a}^p)-y^p)^2]$
		\State Update actor using the sampled policy gradient: 
  \[\nabla_{\theta_j}J(\theta_j) \approx \frac{1}{K_1}\sum_{p} \left[\nabla_{\theta_j} \log \pi_{\theta_j}(a_j|o_j^p,m_{-j}) \hat{Q}_{\omega}^{\pi_{\theta_j}} (\boldsymbol{o}^p,a_1^p,\dots,a_j,\dots,a_n^p)|_{a_j=\pi_{\theta_j}(o_j^p,m_{-j})}\right].\]
	    \State Update actor network parameters: 
	    $\theta  \gets \theta +\alpha_t\delta_{\theta }$; 
	    \State Update critic network parameters: 
	    $\omega  \gets \omega +\beta_t \delta_{\omega }$;
            \State Update communication network parameters: 
	    $\xi  \gets \xi +\eta_t \delta_{\xi }$;
        \If{UPDATE TARGET NETWORKS}
		\State Update actor target network parameters:
		\begin{equation*}
		 \theta' \gets \tau_1 \theta  + (1-\tau_1)\theta'  
		\end{equation*}
		\State Update critic target network parameters: 
		\begin{equation*}
		  \omega' \gets \tau_2 \omega  + (1-\tau_2)\omega'
		\end{equation*}
            \State Update communication target network parameters: 
		\begin{equation*}
		  \xi' \gets \tau_3 \xi  + (1-\tau_3)\xi'
		\end{equation*}
	    \EndIf
		\EndFor   	
		\EndFor
		\EndFor
	\end{algorithmic}}
\end{algorithm}
\clearpage

\section{Experiments}

\subsection{Code}
Codes of RGMComm are available at: 
https://anonymous.4open.science/r/RGMComm-72AC/
  
\subsection{Baseline Details: }
We consider three types of baseline communication architectures that are derived from prior work: 
\begin{enumerate}
    \item \textbf{continuous} communication: we choose SARNet~\cite{sarnet} that outperforms all other algorithms, which leverages a memory-based attention network to produce a stream of uninterrupted, continuous messages; 
    \item one-hot \textbf{discrete} communication: CommNet~\cite{commnet}, MADDPG~\cite{maddpg}, IC3Net~\cite{ic3net}, TarMAC\cite{tarmac}, and SARNet \cite{sarnet}. We enable the discrete communication mode reported in these algorithms, which communicates with discrete but unstructured vectors. The discrete one-hot baselines allow agents to output a one-hot vector within the communication space using the Gumbel softmax operation to calculate gradients and update the communication networks via backpropagation; 
    \item the most recent prototype-based method Vector-Quantized Variational Information Bottleneck (VQ-VIB)~\cite{tucker2022trading}, which trains an autoencoder and use the encoded latent representation followed by a quantization layer  that discretizes $z$ into the nearest embedding vector to communicate.

\end{enumerate}
\subsection{Environment Details}

\begin{figure}[h]
  \centering
  \begin{subfigure}[b]{0.325\textwidth}
    \centering
    \includegraphics[width=\textwidth]{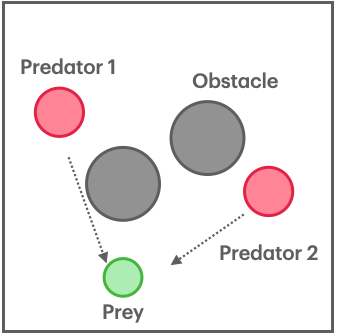}
    \caption{Predator-Prey}
    \label{fig:simple_tag}
  \end{subfigure}
  \hfill
  \begin{subfigure}[b]{0.325\textwidth}
    \centering
    \includegraphics[width=\textwidth]{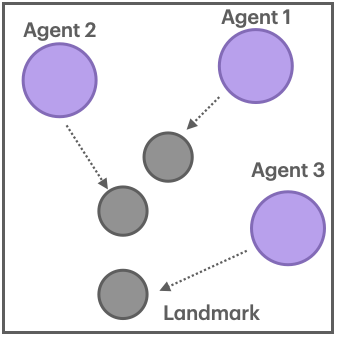}
    \caption{Cooperation Navigation}
    \label{fig:simple_spread}
  \end{subfigure}
  \begin{subfigure}[b]{0.325\textwidth}
    \centering
    \includegraphics[width=\textwidth]{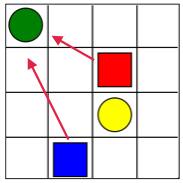}
    \caption{Maze}
    \label{fig:maze}
  \end{subfigure}
  \caption{(a) and (b): Illustrations of the multi-agent particle environment in the experiments: Predator-Prey, Cooperative Navigation respectively. (c): Maze environment for the message-interpretation study.}
\end{figure}

\textbf{Predator-Prey:} This task involves a slower-moving team of $N$ communicating predators(modeled as the learning RGMComm agents) chasing $K$ faster moving preys in an environment with $L$ obstacles. Each predator only gets its own local observation and a limited view of its nearest preys and obstacles without knowing other predators' positions and velocities. Each time a predator collides with a prey, the predator is rewarded with $+10$ while the prey is penalized with $−10$. The predators are also given a group reward $-0.1 \cdot \sum_{n=1}^{N}\min\{d(\textbf{p}_n,\textbf{q}_k), k \in \{1,\dots,K\} \}, n \in\{1,\dots,N\}$, where the $\textbf{p}_n$ and $\textbf{q}_k$ are the coordinates of predator $n$ and prey $k$. The reward of the predators will be decreased for increased distance from preys.

\textbf{Cooperative Navigation:} In this environment, $N$ agents need to cooperate to reach $L$ landmarks while avoiding collisions. Each agent predicts the actions of nearby agents based on its own observation and received information from other agents. The agents are penalized if they collide with each other, and positively rewarded based on the proximity to the nearest landmark. 

\textbf{RGMComm converges to higher return than full observation policy: }The proposed RGMComm algorithm sometimes even converges to a higher mean episode reward since the message generation function learned from the critic provides a succinct, discrete representation of the optimal action-value structure, leading to less noisy communication signals and allowing agents to discover more efficient decision-making policies conditioned on the message labels.

\textbf{Maze Task:}  This is a simple tabular-case multi-agent task. Consider a 4 by 4 grid world in Fig.(\ref{fig:maze}), where landmark 1(the green dot) with a high reward of $r_1$ is always placed at the higher left corner (1,1), while landmark 2(the yellow dot) with a low reward of $r_2$ is always placed at the third diagonal position (3,3). Two agents(the red and the blue square) are randomly placed on the map and can only see their own position without seeing the targets' positions or each other's positions. There is no collision or movement cost. The task is completed and the reward $r_1$ (or $r_2$) is achieved only if both agents occupy the same landmark 1 (or 2) before or at the end of the game. Otherwise, zero rewards are achieved. The game terminates in 3 steps or when the task is completed.

\subsection{Hyperparameters}
\label{appendix:hyperparameters}

\textbf{Sampling Parameter $K_1$, $K_2$ and $K_3$: } we use 256 for $K_1$ and $K_2$, use $16$ for $K_3$.

\textbf{Baselines: } we use the same Hyperparameters reported in SARNet paper~\cite{sarnet} for baselines CommNet\cite{commnet}, IC3Net~\cite{ic3net}, TarMAC\cite{tarmac}, and SARNet \cite{sarnet}. For MADDPG with partial observation and full observation, we use the same Hyperparameters as reported in the PyTorch repository https://github.com/starry-sky6688/MADDPG. For VQ-VIB, we use the same Hyperparameters reported in \cite{tucker2022trading}.

\textbf{RGMComm:}
In our experiments, the actor and critic neural networks consist of three hidden layer with 64 neurons and the Tanh activation function for actor, ReLU activation function for critic. For communication, the communication network has two hidden layers with 1200 units and use ReLU activation function. We use Adam optimizer with the learning rate of the actor-network as $0.0001$, while the critic network has a learning rate of $0.001$. This small difference makes the critic network learn faster than the actor-network. A slower learning rate of the actor could allow it to obtain feedback from the critic in each step, making the learning process more robust. For the soft update of target networks, we use $\tau = 0.01$. Finally, we use $\gamma = 0.95$ in the critic network to discount the reward and calculate the advantage functions. 
For the activation function $f((\bar{Q}^*(o_j^p)-\alpha)/\beta)$ using for normalizing the action-value vectors $\bar{Q}^*(o_j^p)$ before online clustering, we use Tanh function as $f(\cdot)$ and $\alpha=(max\{\bar{Q}^*(o_j^p)^1,\dots,\bar{Q}^*(o_j^p)^{K_1}\}+min\{\bar{Q}^*(o_j^p)^1,\dots,\bar{Q}^*(o_j^p)^{K_1}\})/2$, $\beta=max\{\bar{Q}^*(o_j^p)^1,\dots,\bar{Q}^*(o_j^p)^{K_2}\}$ to do the normalization, respectively.

\textbf{Notes:}
In order to make a fair comparison, even we use the baselines CommNet\cite{commnet}, IC3Net~\cite{ic3net}, TarMAC\cite{tarmac}, and SARNet \cite{sarnet} implemented from repository https://github.com/caslab-vt/SARNet, but we change the Hyperparameters to the same value if we have the same variables in RGMComm. 

\subsection{Agents Behavior and Agreements}
{\bf \noindent Do agents agree on the same clusters?} The online clustering is performed for each agent on samples of the action-values, as shown in our example in Figure~1. Other agents simply condition their policies on received messages during training. Thus, there is no need for agreement before training. 

{\bf \noindent Agents' agreements before training: }
Agents do not pre-agree on clusters before training; instead, RGMComm's algorithm learns the clustering labels dynamically during the training process. This online clustering allows each agent to develop its communication strategy, resulting in potentially varied messages across agents.

\subsection{Detailed Explanations for Experimental Results}

{\bf \noindent Clarifications on our experiments: }
(i) Figure~3 displays the mean episode reward for predators trained by RGMComm. (ii) The prey use independent policies and are considered as part of the stochastic environment. (iii) Increasing the number of agents means increasing the number of predators. (iv) In the evaluations, the prey's policy remains the same across all baselines and RGMComm. We train the prey once and save its trained model/policy for evaluation. Thus, better predator performance indeed implies the advantage of our RGMComm. 

{\bf \noindent Ablation study: }
The ablation study in Figure~6 demonstrates: (i) How different normalizations of the action value vectors $\bar{Q}^*(o_j)$ (which result in different distance measures) affect the performance of RGMComm. For instance, soft-max activation extracts the largest element of $\bar{Q}^*(o_j)$ for the distance calculation. This validates the importance of cosine-distance in our bound. (ii) The comparison with independent and centralized Q learning shows the effectiveness of communication using the RGMComm algorithm. These were conducted in the grid-world maze (details in Appendix~C), chosen for its clarity in showing the explainability result in Figure~7. 

{\bf \noindent Determine the number of labels: } The goal of our paper is to find the optimal communication strategy given any arbitrary constraint on the maximum number of allowed messages (i.e., finite-size discrete alphabet, as mentioned in the introduction and Remark~4). Thus, RGMComm finds the optimal communication for any given constraint of $|M|$ messages. 
In practice, $|M|$ can be determined based on, e.g., limited bandwidth and communication cost. In Figures~4~and~5(c), we have conducted experiments for different numbers of messages, to demonstrate the effectiveness of RGMComm under different constraints.

{\bf \noindent Higher reward than the policy with full observability: } Theoretically, the optimal policy with full observability should achieve the highest reward. However, such a policy would face a combinatorial complexity of size-$|O|^n$ observation space, where $|O|$ is the observation space size of a single agent and $n$ is the number of agents. This makes it difficult in practice to learn (an efficient DNN representation of) such a complex policy with full observability. On the other hand, RGMComm may have a return gap as characterized by Theorem~3, but learning such a simple policy could be more effective in practice.

\end{document}